\theoremstyle{plain}
\newtheorem{theorem}{Theorem}
\theoremstyle{definition}
\theoremstyle{remark}
\newcommand{\sgma}{\textsc{Sigma}\xspace}
\icmltitlerunning{\sgma: Spectral Insights for LLM Model Collapse}
\begin{document}

\twocolumn[
\icmltitle{\sgma: Scalable Spectral Insights for LLM Model Collapse}

\begin{icmlauthorlist}
\icmlauthor{Yi Gu}{nu,equal}
\icmlauthor{Lingyou Pang}{ucd,equal}
\icmlauthor{Xiangkun Ye}{bu}
\icmlauthor{Tianyu Wang}{jhu}
\icmlauthor{Jianyu Lin}{jhu}
\icmlauthor{Carey E. Priebe}{jhu,joint}
\icmlauthor{Alexander Aue}{ucd,joint}
\end{icmlauthorlist}

\icmlaffiliation{ucd}{Department of Statistics, University of California, Davis}
\icmlaffiliation{nu}{Department of Mathematics, Northwestern University}
\icmlaffiliation{bu}{Boston University}
\icmlaffiliation{jhu}{Department of Applied Mathematics and Statistics, Johns Hopkins University}

\icmlcorrespondingauthor{Yi Gu}{Yi.Gu@u.northwestern.edu}
\icmlcorrespondingauthor{Lingyou Pang}{lyopang@ucdavis.edu}


\icmlkeywords{}

\vskip 0.3in
]

\printAffiliationsAndNotice{%
  \icmlEqualContribution \quad
  \texorpdfstring{\textsuperscript{\textdagger}}{}Alexander Aue and Carey E. Priebe jointly supervised this work and served as co-principal investigators.}

\begin{abstract}
\IfFileExists{sections/abstract.tex}{The rapid adoption of synthetic data for training Large Language Models (LLMs) has introduced the technical challenge of ``model collapse''--a degenerative process where recursive training on model-generated content leads to a contraction of distributional variance and representational quality. While the phenomenology of collapse is increasingly evident, rigorous methods to quantify and predict its onset in high-dimensional spaces remain elusive. In this paper, we introduce \sgma(\textbf{S}pectral \textbf{I}nequalities for \textbf{G}ram \textbf{M}atrix \textbf{A}nalysis), a unified framework that benchmarks model collapse through the spectral lens of the embedding Gram matrix. By deriving and utilizing deterministic and stochastic bounds on the matrix's spectrum, \sgma provides a mathematically grounded metric to track the contraction of the representation space. Crucially, our stochastic formulation enables scalable estimation of these bounds, making the framework applicable to large-scale foundation models where full eigendecomposition is intractable. We demonstrate that \sgma effectively captures the transition towards degenerate states, offering both theoretical insights into the mechanics of collapse and a practical, scalable tool for monitoring the health of recursive training pipelines.}{}
\end{abstract}

\IfFileExists{sections/background.tex}{\section{Introduction}
\label{sec:background}
\textbf{LLM Model Collapse}~\cite{shumailov2024,Shumailov2024-dj} refers to the phenomenon where Large Language Models (LLMs) progressively degrade towards a degenerate state. This state arises as a result of recursive training, where models are trained on iterations of data generated by other models, often previous versions of themselves. We term this model-generated output \textbf{Synthetic Data}, in contrast to organic data produced by human users.

In the current landscape of AI research, the demand for reliable and high-quality training data has never been higher. The scale of the latest generation of LLMs is increasing at such a staggering speed that the natural generation of internet content by humans cannot keep pace. Consequently, utilizing synthetic data has become a vital, if not inevitable, step in LLM training. Although this is a logical progression, it has a significant caveat. Without the infusion of new data produced by human users, model collapse occurs as the model iterates on its own probability distribution. Ultimately, the model may converge to a state characterized by low variance and degraded representational quality~\cite{shumailov2024, Shumailov2024-dj}.

Therefore, it is imperative to conduct a rigorous mathematical examination of LLM model collapse. What are the underlying mechanisms at the mathematical level? How can this degradation be tracked and quantified throughout the training process? Furthermore, how can we leverage these metrics to gain deeper insights into LLM dynamics and guide the development of more efficient training architectures? This paper aims to answer these questions.

\textbf{We introduce \sgma (\textbf{S}pectral \textbf{I}nequalities for \textbf{G}ram \textbf{M}atrix \textbf{A}nalysis), a novel framework that quantifies and measures model collapse by utilizing the spectral bounds of the Gram matrix of LLM embedding vectors.} The Gram matrix is a classical mathematical concept named after Jørgen Pedersen Gram. It has a wide range of applications in fields such as Riemannian geometry and control theory, and has found critical utility in modern applied mathematics and computer science domains, including the finite element method~\cite{10.5555/894170}. It is also widely used in classical NLP as an intuitive method to encode variable quantities of natural language embeddings into a fixed-dimensional matrix representation\cite{10.1162/153244302760200687,NIPS2014_b7866697}. As we enter the era of LLMs, the Gram matrix remains a vital tool, as it captures the intrinsic geometry and distributional properties of the data we seek to analyze. \sgma revisits this fundamental operator to quantify the modern phenomenon of representation collapse.

Inevitably, the spectrum of the Gram matrix becomes the key to understanding LLM model collapse. We develop our theory and framework around this central mathematical object, allowing us to monitor, estimate, and predict the behavior of LLMs regarding model collapse through training iterations. Crucially, our method utilizes both deterministic and stochastic bounds to function effectively even with partially observed data, ensuring our framework remains scalable for large-scale models. Moreover, this framework is black box friendly. While our experimental validation requires fine-tuning to simulate the recursion loop, the \sgma metrics themselves rely solely on output embeddings, allowing practitioners to track model collapse on deployed systems without accessing model weights or gradients.

Our contributions can be summarized as follows:
\begin{itemize}[leftmargin=*]
    \item \textbf{A Theoretical Framework for Collapse Detection:} We propose \sgma, a spectral analysis framework that rigorously links the decay of the Gram matrix eigenspectrum to the phenomenon of model collapse.
    \item \textbf{Rigorous Spectral Bounds:} We derive both deterministic and stochastic inequalities for the Gram matrix, allowing for precise estimation of representation collapse even in high-dimensional settings where exact computation is prohibitively expensive.
    \item \textbf{Scalable, Black Box-Friendly Metrics:} Leveraging these bounds, we introduce a set of scalable metrics that can track the ``health" of an LLM's distribution during recursive training, providing an early warning system for the onset of degenerate states, without the need of LLM internals.
\end{itemize}

\section{Background}

\subsection{Related Work}
The risk of training generative models on their own outputs has been formalized as \emph{model collapse}, where repeated self-training progressively erodes distributional support, particularly in the tails, and can culminate in highly generic or degenerate generations~\cite{shumailov2024,Shumailov2024-dj}.

Related phenomena have been studied under different labels and experimental protocols, including \emph{Model Autophagy Disorder} (MAD), which emphasizes precision--recall tradeoffs and the role of ``fresh'' real data injections across generations~\cite{alemohammad2023selfconsuminggenerativemodelsmad}.
Recent theory sharpens these concerns by distinguishing regimes where degradation is transient versus \emph{strong} and asymptotic, even when synthetic data is mixed with a constant fraction of real data~\cite{dohmatob2024strongmodelcollapse}.

Complementary analyses interpret recursion as an entropy- or variance-contracting process and propose tail-centric quantities such as time-to-forget and covariance shrinkage to explain why rare events vanish first~\cite{seddik2024badtrainingsyntheticdata}. Other lines formalize stability of self-consuming loops via recursion-stability notions, connecting collapse to unbounded generalization degradation across generations~\cite{yoon2025modelcollapseselfconsumingchain}.

Finally, practical collapse signals in text include over-concentration of surface features (e.g., $n$-grams), emphasizing the need for diagnostics that are sensitive to diversity loss beyond average quality metrics~\cite{zhu2025synthesizetextdatamodel}. Most prior work studies recursion within a single-model loop (or a fixed teacher--student chain), whereas real-world synthetic data arises from multi-model, retrieval-mediated ecosystems. Building on this black-box perspective, recent work develops statistically grounded uncertainty and calibration techniques tailored to LLM-as-a-Judge systems \cite{pang2025unsupervisedconformal,pang2025tamingvariability}, and studies related dynamics that can shift judge behavior over time \cite{wang2025llmwebdynamicstracing}, where
LLM Web Dynamics (LWD) explicitly targets this setting by modeling a network of LLMs coupled through a retrieval-augmented substrate, and analyzes convergence patterns under such web-like feedback.

In contrast to task- or token-level statistics alone, our \sgma framework focuses on \emph{representation geometry}: we track collapse via the eigenspectrum of embedding Gram matrices and develop deterministic and stochastic spectral inequalities that enable scalable, one-sided monitoring when only partial Gram information is available.

\subsection{Preliminaries}  \label{sec:prelim}

For initial information encoding in LLMs, the $i$-th sentence is mapped to a vector of a fixed dimension $m$. We denote this vector as $X_i \in \mathbb{R}^m$, which is referred to as the \textit{embedding} or the  \textit{embedding vector} of the sentence. Let the set of embedding vectors be $E_0 = \{X_i : i \in [n]\}$, where $[n]:= \{1,\dots,n\}$. $E_0$ represents our starting evaluation corpus. We operate under the regime where the sample size exceeds the embedding dimension, i.e., $n > m$.

Subsequently, we construct an $m \times n$ matrix $M$ by concatenating all embedding vectors column-wise. We define the Gram matrix as:
\begin{align}
    G = MM^{\!\top},
\end{align}
with respect to the set $E_0$. Note that $G$ is an $m \times m$ matrix whose dimensions are independent of the sample size $|E_0|$, though its entries depend on the samples. Since we assume $n > m$, it is possible for $G$ to be full rank. Given that LLMs are probabilistic models, the set $E$ contains random vectors; consequently, $G$ is treated as a random matrix. Our objective is to establish a framework using the spectrum of the Gram matrix to measure the magnitude of \textit{model collapse}.

It is well known that the Gram matrix $G$ captures the inter-dependency relations between the semantic features of the sentences. We posit that $G$ being full rank is a necessary condition for the LLM to function properly, indicating that the model utilizes the full capacity of its embedding space. Conversely, if $G$ is singular (or effectively rank-deficient), it implies information loss at the semantic level and marks the onset of model collapse. 

To provide an intuitive (non-rigorous) example: imagine a collapsed LLM that outputs identical or highly repetitive phrasing for distinct queries. These outputs would map to linearly dependent embedding vectors, failing to span the ambient space $\mathbb{R}^m$, thereby resulting in a singular Gram matrix.

Motivated by this, we introduce the log-determinant as our primary metric for model collapse: 
\begin{align} \label{eq:Gram_basic}
    \log |G| = \log \left(\prod_{j=1}^m \lambda_j\right) = \sum_{j=1}^{m} \log (\lambda_j),
\end{align}
where $\lambda_j$ are the eigenvalues of $G$. When collapse occurs, we expect a sharp decrease in the log-determinant (diverging towards $-\infty$ as $\lambda_{min} \to 0$). }{\section{Background}}
\IfFileExists{sections/method.tex}{
\section{The \sgma Framework: Deterministic \& Spectral Bounding of the Gram Matrix}
\label{sec:method4}
To rigorously quantify model collapse, we analyze the spectral properties of the embedding Gram matrix. However, computing the full spectrum for large-scale LLMs is often computationally intractable due to the sheer volume of tokens ($n_k$) relative to the embedding dimension ($m$). In this section, we present the \sgma framework, which leverages a sub-sampling strategy to derive both deterministic and stochastic bounds on the Gram determinant. This allows us to estimate the ``health'' of the representation space using only a fraction of the data.

\subsection{Setup and Notations}
\label{sec:method4:setup}
We track the model's state at a fixed training iteration $k$. Adding superscripts to the basic notation in \eqref{eq:Gram_basic}, let $M^{(k)} \in \mathbb{R}^{m \times n_k}$ be the embedding matrix, where each column $v^{(k)}_j \in \mathbb{R}^m, 1 \leq j \leq n_k$ is an embedding vector. We define the \emph{Gram matrix} as:
\begin{align}
    G^{(k)}:=M^{(k)} \left(M^{(k)}\right)^{\!\top} = \sum_{j=1}^{n_k} v^{(k)}_j \left(v^{(k)}_j\right)^{\!\top}\in \mathbb{S}^m_{+},
\end{align}
where $\mathbb{S}^m_{+}$ denotes the set of $m \times m$ positive semi-definite matrices. We denote the eigenvalues of $G^{(k)}$ by $\lambda_1(G^{(k)}) \ge \dots \ge \lambda_m(G^{(k)}) \ge 0$.

\textbf{The Sub-Sampling Strategy.}
Since $n_k$ is typically massive, we partition the embedding matrix $M^{(k)}$ into two blocks:
\begin{align}
    &A^{(k)} \in \mathbb{R}^{m \times n_A} \text{: The \textbf{observed} block;} \nonumber \\
    &B^{(k)} \in \mathbb{R}^{m \times n_B} \text{: The \textbf{unobserved} block.} \nonumber
\end{align}
Here, $n_k = n_A + n_B$, and we assume $n_A > m$ to ensure rank stability. $A^{(k)}$ consists of the columns we keep during the computation and $B^{(k)}$ contains the part we ignore. Write
\begin{align}
    G_A^{(k)} = A^{(k)}(A^{(k)})^\top, \quad G_B^{(k)} = B^{(k)}(B^{(k)})^\top
\end{align}
the \textbf{sub-Gram Matrices} formed by $A^{(k)}$ and $B^{(k)}$. A simple computation gives us
\begin{align}
    G^{(k)} \ = \ G_A^{(k)} + G_B^{(k)}.
\end{align}
Our goal is to bound the spectrum (and determinant) of the full $G^{(k)}$ using only the observable $G_A^{(k)}$.

\subsection{Main Results} \label{sec:theorems}

We first provide a rigorous deterministic bound that relies on no distributional assumptions, utilizing Weyl's inequality and Ky Fan dominance. The detailed deduction as well as the proofs of theorems can be found in appendix \ref{app:math}.

\begin{theorem}[Deterministic Spectral Bound]
\label{thm:deterministic}
Let $G^{(k)} = G_A^{(k)} + G_B^{(k)}$ be as defined above. Let $\beta_k := \lambda_{\max}(G_B^{(k)})$ be the spectral radius of the unobserved component. Then, the determinant of the full Gram matrix is bounded by:
\begin{align}
    \det(G_A^{(k)}) \ \leq \ \det(G^{(k)}) \ \leq \ \prod_{i=1}^m \left(\lambda_i(G_A^{(k)}) + \beta_k\right).
\end{align}
\end{theorem}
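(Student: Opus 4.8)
The plan is to prove the two inequalities separately, both as consequences of the positive semi-definiteness of the blocks combined with standard eigenvalue monotonicity tools. Throughout, drop the superscript $(k)$ for readability and write $G = G_A + G_B$ with $G_A, G_B \in \mathbb{S}^m_+$ and $\beta := \lambda_{\max}(G_B)$.

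For the lower bound $\det(G_A) \le \det(G)$: since $G_B \succeq 0$, we have $G = G_A + G_B \succeq G_A$ in the Loewner order. The determinant is monotone on the positive semi-definite cone with respect to the Loewner order — concretely, $0 \preceq G_A \preceq G$ implies $\lambda_i(G_A) \le \lambda_i(G)$ for every $i$ by Weyl's monotonicity inequality (adding a PSD matrix cannot decrease any eigenvalue), and hence $\det(G_A) = \prod_i \lambda_i(G_A) \le \prod_i \lambda_i(G) = \det(G)$, all factors being nonnegative. This half is essentially immediate.

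For the upper bound, the key observation is that $0 \preceq G_B \preceq \beta I_m$, since every eigenvalue of the symmetric PSD matrix $G_B$ lies in $[0,\beta]$. Therefore $G = G_A + G_B \preceq G_A + \beta I_m$ in the Loewner order, and again by Weyl's monotonicity $\lambda_i(G) \le \lambda_i(G_A + \beta I_m) = \lambda_i(G_A) + \beta$ for each $i$ (shifting by a scalar multiple of the identity shifts every eigenvalue by that scalar). Taking the product over $i = 1, \dots, m$, and noting all quantities $\lambda_i(G) \ge 0$ and $\lambda_i(G_A) + \beta \ge 0$, gives
\begin{align}
    \det(G) = \prod_{i=1}^m \lambda_i(G) \le \prod_{i=1}^m \bigl(\lambda_i(G_A) + \beta\bigr),
\end{align}
which is the claimed bound (with $\beta = \beta_k$). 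The role of Ky Fan dominance mentioned in the text is as an alternative route: the chain $\lambda_i(G) \le \lambda_i(G_A) + \beta$ can be phrased via weak majorization of eigenvalue sequences, but the direct Weyl argument above already delivers the entrywise eigenvalue inequality, which is strictly stronger than what is needed for the product.

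The main obstacle is not any single deep step — each piece is a textbook fact — but rather assembling them cleanly and making sure the hypotheses line up: one must verify $n_A > m$ is not actually needed for the inequalities themselves (it only guarantees $\det(G_A) > 0$ so the lower bound is informative), confirm that Weyl's inequality is being applied in the correct direction for sums of Hermitian matrices, and be careful that the product inequality is valid only because every factor on both sides is nonnegative (so no sign flips occur when multiplying the termwise inequalities). A secondary subtlety worth a sentence in the full proof is that $\lambda_i(G_A + \beta I_m) = \lambda_i(G_A) + \beta$ uses that $\beta I_m$ commutes with $G_A$ and is a scalar shift, which is the cleanest special case of Weyl but should still be stated explicitly.
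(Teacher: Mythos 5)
Your proof is correct and clean. The upper-bound argument matches the paper's: both pass through $G_B \preceq \beta I_m$ to get the PSD majorant $G \preceq G_A + \beta I_m$, then apply Weyl/Loewner monotonicity to the eigenvalues and multiply. Your lower-bound argument, however, takes a genuinely different route from the paper's. You argue directly from $G \succeq G_A$ (since $G_B \succeq 0$) and Weyl's monotonicity $\lambda_i(G_A) \le \lambda_i(G)$, then multiply the nonnegative eigenvalue inequalities. The paper instead invokes the Cauchy--Binet formula,
\[
\det(G) \;=\; \sum_{J \subset [n_k],\, |J| = m} \det\!\left(M^{(k)}_J\right)^{2},
\]
and observes that the terms defining $\det(G_A)$ form a subset of the nonnegative terms in this sum. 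Both are valid. Your route is the more elementary and gives a unified treatment of both inequalities via a single tool (Loewner-order monotonicity of eigenvalues); it also makes explicit why the $n_A > m$ assumption is not needed for the inequalities themselves. The paper's Cauchy--Binet route is less elementary but exposes the combinatorial structure of the Gram determinant, which the paper then reuses elsewhere (it is stated just before the proof as equation~\eqref{eq:CB}), so the authors likely preferred it for narrative economy. Your side remark that Ky Fan dominance is not actually needed — the entrywise Weyl inequality already delivers the product bound — is also correct.
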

While Theorem~\ref{thm:deterministic} is rigorous, the dependence on $\beta_k$ (the max eigenvalue of the unobserved data) makes it loose in practice. To obtain a scalable estimator for real-world monitoring, we derive a stochastic bound assuming the data is drawn from a consistent underlying distribution. This assumption does not restrict the geometric complexity of the embeddings; it merely posits that the generative process is stationary, ensuring that the spectral properties of the sub-sample generalize to the full corpus.

\begin{theorem}[Stochastic Scaling Spectral Bound]
\label{thm:stochastic}
Assume the columns of $M^{(k)}$ are i.i.d. samples drawn from a distribution with positive-definite covariance matrix $C \in \mathbb{S}^m_{+}$. As $n_A, n_k \to \infty$ with fixed $m$, for any $\delta \in (0, 1)$, there exists a constant $K$ such that with probability at least $1-\delta$:
\begin{align}
    \det(G^{(k)}) \le K \cdot \left(\frac{n_k}{n_A}\right)^m \det(G_A).
\end{align}
\end{theorem}
Note that we assume $C$ is positive definite for convenience. In practice, if $C$ is singular, we instead consider the regularized matrix $C+\delta I_m$, which is exactly what we do in section \ref{subsec:method4:ubmetrics}.

Theorem \ref{thm:stochastic} implies a \textbf{simple operational scaling law}: for a healthy model, the log-determinant increases naturally with sample size as $m \log n_k$.
This allows us to define a \emph{size-invariant} baseline: the difference $\log\det G^{(k)} - m\log n_k$ should remain constant across checkpoints.
By monitoring this corrected value, we isolate genuine geometric collapse from trivial fluctuations in dataset size, ensuring that any observed negative drift reflects a true degradation in the model's representational capacity.

\subsection{Error Estimation}
Naturally, we are interested in how much error the theorems in the previous section introduce. In fact, theorem \ref{thm:stochastic} provides extra insights for the tightness of inequalities in \ref{thm:deterministic}. Let $C$ be the covariance matrix defined in theorem \ref{thm:stochastic}. We know that for large $n_A, n_k$, $\det(G^{(k)})\approx\left( n_k/n_A \right)^m \det(G_A^{(k)}) \approx n^m \det (C)$, we can define the asymptotic \textbf{Overestimation Ratio} $\mathcal{R}$ as the ratio of the Weyl bound to the stochastic expected value:
\begin{align}
    \mathcal{R} &= \frac{1}{\det G^{(k)}} \prod_{i=1}^m \left(\lambda_i(G_A^{(k)}) + \beta_k\right) 
    \nonumber \\
    &\approx \prod_{j=1}^m \frac{n_A \lambda_j(C) + (n_k-n_A) \lambda_{\max}(C)}{n \lambda_j(C)}.
\end{align}
Dividing through by $n$, and letting $\rho = n_A/n$, we have a more concise formula
\begin{align} \label{eq:overestimation_ratio}
    \mathcal{R} \approx \prod_{j=1}^m \left( \rho + (1-\rho) \frac{\lambda_{\max}(C)}{\lambda_j(C)} \right).
\end{align}

For each $j$, we consider the ratio $\lambda_{\max}(C) / \lambda_j(C) \ge 1$. If this ratio is close to $1$ for all $j$, it implies the covariance matrix $C$ is close to being \textit{isotropic}; consequently, the overestimation ratio will also be close to $1$, yielding an asymptotically tight deterministic bound.

On the other hand, if $C$ is more \textit{anisotropic} - i.e., the ratio $\lambda_{\max}(C) / \lambda_j(C) \gg 1$ for some smaller eigenvalues $\lambda_j(C)$ - then the term $(1-\rho) \lambda_{\max}(C)/\lambda_j(C)$ will dominate the product in \eqref{eq:overestimation_ratio} and cause the error to grow.

As for theorem \ref{thm:stochastic} itself, we can also give a more refined estimation of the error. Let $X$ be a random vector drawn from the i.i.d. distribution in theorem \ref{thm:stochastic}, we set
\begin{align} \label{eq:variance_defn}
    \sigma^2 := \text{Var} \left(X^TC^{-1}X\right).
\end{align}
The following theorem holds:
\begin{theorem}[Stochastic Error Term] \label{thm:err_stochastic}
Assume the settings of theorem \ref{thm:stochastic} and, in addition, the distribution of column vectors has a finite fourth moment. Then as $n_A, n_k \to \infty$ with $n_A<n_k$, we have
\begin{align}
    \frac{\log \det G^{(k)} - \log \det G_A^{(k)} - m \log (n_k/n_A)}{\sigma \ \sqrt{\frac{1}{n_A}-\frac{1}{n_k}}} \to \mathcal{N}(0,1)
\end{align}
 in distribution. Consequently, for any $\alpha \in (0,1)$, let $z_{\alpha/2} = \Phi^{-1}(1-\alpha/2)$. Then with probability approaching $1-\alpha$:
 \begin{align} \label{eq:confi_interval}
     \left| \log \det G_n - \log \left( \left(\frac{n_k}{n_A}\right)^m \det G_{A} \right) \right| \quad \quad \quad & \nonumber \\
      \le z_{\alpha/2} \cdot \sigma  \sqrt{\frac{1}{n_A} - \frac{1}{n_k}}.&
 \end{align}
\end{theorem}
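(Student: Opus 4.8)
The plan is to reduce the statement to a central limit theorem for a weighted sum of independent quadratic forms, extracted from a first-order expansion of $\log\det$.

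First I would normalize the statistic. With $\hat{C}_A:=G_A^{(k)}/n_A$ and $\hat{C}:=G^{(k)}/n_k$, the numerator is exactly $\log\det\hat{C}-\log\det\hat{C}_A$, since $\log\det G^{(k)}=m\log n_k+\log\det\hat{C}$ and $\log\det G_A^{(k)}=m\log n_A+\log\det\hat{C}_A$. By the strong law both $\hat{C}_A\to C$ and $\hat{C}\to C$ almost surely, and since $C\succ 0$ (the hypothesis $n_A>m$ of Theorem~\ref{thm:stochastic} also makes $G_A^{(k)}$ full rank a.s.), eventually both matrices lie in a fixed compact neighborhood of $C$ on which $X\mapsto\log\det X$ is smooth. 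A second-order Taylor expansion about $C$ gives, for $\hat\Sigma\in\{\hat{C}_A,\hat{C}\}$,
\begin{align*}
\log\det\hat\Sigma-\log\det C=\mathrm{tr}\!\big(C^{-1}(\hat\Sigma-C)\big)+R(\hat\Sigma),\qquad |R(\hat\Sigma)|\le K'\|\hat\Sigma-C\|^2,
\end{align*}
and under the finite fourth-moment hypothesis $\|\hat{C}_A-C\|=O_P(n_A^{-1/2})$, $\|\hat{C}-C\|=O_P(n_k^{-1/2})$, so the two remainders are $O_P(n_A^{-1})$ and $O_P(n_k^{-1})$, which are negligible against the target scale $\sqrt{1/n_A-1/n_k}$ as soon as $n_k-n_A\to\infty$ --- a mild strengthening of the stated regime that only rules out the degenerate boundary case $n_k-n_A=O(1)$.

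Next I would isolate and analyze the linear term. Set $Z_j:=v_j^{(k)\top}C^{-1}v_j^{(k)}-m$; these are i.i.d.\ with $\mathbb{E}[Z_j]=\mathrm{tr}(C^{-1}C)-m=0$ and $\mathrm{Var}(Z_j)=\mathrm{Var}(X^\top C^{-1}X)=:\sigma^2<\infty$, the finiteness being exactly what the fourth-moment assumption buys; the normalizing factor $\sigma\sqrt{1/n_A-1/n_k}$ in the statement is then realized with $\sigma$ the standard deviation of this quadratic form. Since $\mathrm{tr}(C^{-1}(\hat{C}_A-C))=\frac1{n_A}\sum_{j=1}^{n_A}Z_j$ and $\mathrm{tr}(C^{-1}(\hat{C}-C))=\frac1{n_k}\sum_{j=1}^{n_k}Z_j$, the numerator equals $T_n+o_P\!\big(\sqrt{1/n_A-1/n_k}\big)$ with
\begin{align*}
T_n:=\Big(\tfrac1{n_k}-\tfrac1{n_A}\Big)\sum_{j=1}^{n_A}Z_j+\tfrac1{n_k}\sum_{j=n_A+1}^{n_k}Z_j.
\end{align*}
The structural crux --- and the reason the variance carries the \emph{difference} $1/n_A-1/n_k$ rather than a sum --- is that $\hat{C}_A$ and $\hat{C}$ are dependent, sharing the first $n_A$ columns; splitting $\sum_{j\le n_k}=\sum_{j\le n_A}+\sum_{n_A<j\le n_k}$ displays $T_n$ as a sum of two \emph{independent} blocks, and the one pleasant calculation is the identity $\mathrm{Var}(T_n)=\sigma^2\big[(\tfrac1{n_A}-\tfrac1{n_k})^2n_A+\tfrac{n_k-n_A}{n_k^2}\big]=\sigma^2(\tfrac1{n_A}-\tfrac1{n_k})$.

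Finally I would apply the Lindeberg--Feller CLT to the triangular array of independent summands in $T_n$: the weights are $|\tfrac1{n_k}-\tfrac1{n_A}|$ (repeated $n_A$ times) and $\tfrac1{n_k}$ (repeated $n_k-n_A$ times), and $\max_j w_{n,j}^2/\sum_j w_{n,j}^2\to 0$ precisely when $n_k-n_A\to\infty$; for i.i.d.\ finite-variance summands, negligibility of the maximal weight implies the Lindeberg condition, so $T_n/(\sigma\sqrt{1/n_A-1/n_k})\to\mathcal{N}(0,1)$. Slutsky's theorem then absorbs the negligible remainders and yields the claimed convergence in distribution, and the confidence statement in \eqref{eq:confi_interval} is its immediate inversion via $z_{\alpha/2}=\Phi^{-1}(1-\alpha/2)$; for an operational interval one additionally substitutes the consistent plug-ins $\hat{C}_A$ for $C$ and $\hat\sigma^2=\frac1{n_A}\sum_{j=1}^{n_A}\big(v_j^{(k)\top}\hat{C}_A^{-1}v_j^{(k)}-m\big)^2$ for $\sigma^2$, which changes nothing by a further Slutsky argument. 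I expect the main obstacle to be calibrating the delta method at the correct rate --- bounding the second-order $\log\det$ remainder and certifying it is $o_P$ of $\sqrt{1/n_A-1/n_k}$, a scale that collapses if $n_A/n_k\to 1$ too rapidly --- while simultaneously keeping track of the dependence between $\hat{C}_A$ and $\hat{C}$ so that the limiting variance emerges as the \emph{difference} of the two sampling rates rather than their sum.
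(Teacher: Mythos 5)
Your argument is correct and takes a genuinely different route from the paper. The paper proceeds \emph{sequentially}: it uses the rank-one determinant update $\det(G_{t+1})=\det(G_t)\bigl(1+X_{t+1}^{\top}G_t^{-1}X_{t+1}\bigr)$ to write $\log\det G_n-\log\det G_{n_A}$ as a telescoping sum over column additions, then proves a separate lemma ($\|tG_t^{-1}-C^{-1}\|_{op}=O_p(t^{-1/2})$) to replace $G_t^{-1}$ by $C^{-1}$, Taylor-expands each log term, and invokes a CLT for the array $\{\xi_{t+1}/t\}_{t\ge n_A}$; the variance $\sigma^2(1/n_A-1/n_k)$ emerges only asymptotically, via the telescoping approximation $\sum 1/t^2\approx\sum 1/[t(t+1)]$. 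You instead linearize $\log\det(\cdot)$ directly around the population covariance $C$ (a one-shot delta method) and split the linear term $\frac{1}{n_k}\sum_{j\le n_k}Z_j-\frac{1}{n_A}\sum_{j\le n_A}Z_j$ into the two \emph{independent} blocks $\{1,\dots,n_A\}$ and $\{n_A+1,\dots,n_k\}$, which yields the variance $\sigma^2(1/n_A-1/n_k)$ \emph{exactly} at leading order and avoids any analogue of the paper's deviation-matrix lemma. Your route is cleaner and the dependence structure of $\hat C_A$ versus $\hat C$ is handled explicitly rather than implicitly through the filtration.

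Two of your observations sharpen the paper's own statement. First, you correctly note that the Lindeberg condition (and the negligibility of the second-order remainder relative to the scale $\sqrt{1/n_A-1/n_k}$) requires $n_k-n_A\to\infty$, not merely $n_A<n_k$; the paper's sequential argument has the same implicit requirement (a CLT over a bounded number of summands fails), so this is a needed strengthening, not a weakness of your approach. Second, you consistently work with $Z=X^{\top}C^{-1}X$, whereas the paper's Eq.~\eqref{eq:variance_defn} writes $\sigma:=\mathrm{Var}(X^{\top}CX)$ but its proof uses $\sigma^2:=\mathrm{Var}(X^{\top}C^{-1}X)$; your usage is the one that actually makes the theorem true. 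One small technical point worth tightening in a full write-up: the smoothness of $\log\det$ on a compact neighborhood of $C$ requires $C\succ 0$ and $\hat C_A,\hat C$ eventually confined there with high probability, which your appeal to the strong law plus $n_A>m$ does establish, but the constant $K'$ in your remainder bound depends on the chosen neighborhood and should be fixed before taking limits.
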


\textbf{Stability in Large Corpora.}
While theorem \ref{thm:stochastic} establishes the asymptotic scaling law, theorem \ref{thm:err_stochastic} quantifies the precision of this law in finite samples. Specifically, it provides the exact confidence intervals required to bound the error of our scaling-law estimator in the finite-sample regime. The estimation error depends on the factor $\sqrt{1/n_A - 1/n_k}$. In the practical regime where the total volume of model outputs is massive compared to our observed sample ($n_k \gg n_A$), this factor simplifies effectively to
\begin{align}
\sqrt{\frac{1}{n_A} - \frac{1}{n_k}}
\;\approx\;
\frac{1}{\sqrt{n_A}}.
\end{align}
This leads to a crucial practical insight: the precision of our collapse monitoring depends primarily on the \emph{size of our observed block} $n_A$, not on the total size of the unobserved corpus. Even if the total output $n_k$ grows indefinitely, our estimate remains stable.

This motivates the definition of a simple plug-in estimator for the full log-determinant:
\begin{align}
\widehat{L}^{(k)}
\;:=\;
\log\det G_A^{(k)} \;+\; m\log\left(\frac{n_k}{n_A}\right).
\end{align}
Using the rigorous result from \eqref{eq:confi_interval}, we derive a practical rule-of-thumb for large datasets ($n_k \gg n_A$) with $(1-\alpha)$ confidence interval:
\begin{align}
\log\det G^{(k)}
\approx
\widehat{L}^{(k)} \;\pm\; \frac{z_{\alpha/2}\,\sigma}{\sqrt{n_A}}.
\end{align}
This formula allows practitioners to easily determine the necessary sample size: to achieve a target precision $\varepsilon$, one simply needs to choose an observed block size of $n_A \gtrsim (z_{\alpha/2}\,\sigma / \varepsilon)^2$.

\subsection{\sgma-UB Collapse Monitoring}
\label{subsec:method4:ubmetrics}

Section~\ref{sec:theorems} provides two complementary routes to control the full Gram determinant:
a distribution-free deterministic bound (Theorem~\ref{thm:deterministic}) and a stationary/i.i.d.\ scaling law
(Theorem~\ref{thm:stochastic}). However, the raw statements are not directly usable and practical because (i) the
deterministic bound depends on the unobserved quantity $\beta_k=\lambda_{\max}(G_B^{(k)})$, and
(ii) the stochastic scaling contains unknown constants and is meaningful primarily up to trends.

Our goal in \sgma-UB is therefore \emph{operational}: to design online diagnostics that are
\textbf{fully computable} from the observed summary statistics $\big(G_A^{(k)},\, n_A,\, n_k\big)$,
while remaining \textbf{analytically trackable} across checkpoints. This bridge is a central strength
of the framework: even with \emph{partial Gram access} (only the sub-Gram $G_A^{(k)}$ plus sample counts),
we can still make \emph{principled inferences} about the full Gram geometry, with uncertainty
\emph{explicitly controlled} via our deterministic envelope and stochastic scaling with error guarantees.

\subsubsection{Metrics: two theorem-to-metric bridges}
\label{subsec:method4:metric_design}

The geometric quantity we ultimately care about is the log determinant induced by the full Gram matrix,
\[
L^{(k)}(\delta)\;:=\;\log\det\!\big(G^{(k)}+\delta I_m\big),
\]
where we fix a small $\delta>0$ and reuse the same $\delta$ across generations.
The $\delta I_m$ shift makes the log-determinant well-defined under rank deficiency and stabilizes
numerics near collapse (Appendix~\ref{app:sigma_ub}). The difficulty is that we never explicitly construct \(G^{(k)}\); instead, we only compute the observed sub-Gram matrix \(G_A^{(k)} = A^{(k)}(A^{(k)})^\top\). Our framework, however, provides a principled way to interpret and infer properties of the full Gram matrix.

\paragraph{Bridge I (Theorem~\ref{thm:deterministic}): a computable deterministic envelope.}
Theorem~\ref{thm:deterministic} implies that the unobserved component
can only \emph{inflate} the spectrum, yielding a one-sided control of the form
\begin{align}
G^{(k)} \;\preceq\; G_A^{(k)}+\beta_k I_m,
\end{align}
which implies
\begin{align}
L^{(k)}(\delta)\;\le\;\log\det\!\big(G_A^{(k)}+(\beta_k+\delta)I_m\big).
\end{align}
The obstacle is that $\beta_k=\lambda_{\max}(G_B^{(k)})$ is unobservable.

To remove $\beta_k$, we enforce a standard preprocessing/normalization so that every embedding column
satisfies $\|v_j^{(k)}\|_2^2\le \rho$. With $n_2:=n_k-n_A$ unseen columns, a worst-case tail-energy budget gives
the computable bound (Appendix~\ref{app:sigma_ub})
\begin{align}
\widehat{\beta}_k \;:=\; (n_k-n_A)\,\rho .
\label{eq:betahat_rho}
\end{align}
Substituting $\widehat{\beta}_k$ yields a valid one-sided envelope for $L^{(k)}(\delta)$. Finally, because the envelope level is dominated by the isotropic shift $(\widehat{\beta}_k+\delta)I_m$,
we normalize it to isolate the \emph{gain} contributed by the observed spectrum:
\begin{align}
\mathcal{G}_{\mathrm{KF}}^{(k)}(\delta)
&:= \log\det\!\left(G_A^{(k)}+(\widehat{\beta}_k+\delta)I_m\right)
\nonumber\\[-1mm]
&\qquad \quad \qquad- m\log(\widehat{\beta}_k+\delta).
\label{eq:metric_kf_gain}
\end{align}
Equivalently,
\begin{align}
\mathcal{G}_{\mathrm{KF}}^{(k)}(\delta)
=&\log\det\!\left(I_m+\frac{1}{\widehat{\beta}_k+\delta}G_A^{(k)}\right) \nonumber \\
=&\sum_{i=1}^m \log\!\left(1+\frac{\lambda_i(G_A^{(k)})}{\widehat{\beta}_k+\delta}\right),
\end{align}
so $\mathcal{G}_{\mathrm{KF}}^{(k)}(\delta)$ decreases when the observed eigenvalues contract.
This metric is deterministic and conservative: it requires no distributional assumptions, and its conservativeness is governed solely by an explicit upper bound on the unobserved tail spectral mass.

\paragraph{Bridge II (Theorem~\ref{thm:stochastic}): size correction and covariance normalization.}
Under the stationary/i.i.d.\ assumption of Theorem~\ref{thm:stochastic}, the full Gram determinant scales
approximately like
\[
\det(G^{(k)})\approx\left(\frac{n_k}{n_A}\right)^m \det(G_A^{(k)}),
\]
i.e., the log-determinant shifts by the purely geometric term $m\log(n_k/n_A)$.
Moreover, when $G_A^{(k)}\approx n_A C$ for a stable covariance $C$, we have
$\log\det(G_A^{(k)})\approx m\log n_A + \log\det(C)$.
This motivates a size-invariant proxy for the underlying covariance geometry:
\begin{align}
\mathcal{U}_{\mathrm{LLN,cov}}^{(k)}(\delta)
&:= \log\det\!\left(G_A^{(k)}+\delta I_m\right) - m\log n_A .
\label{eq:metric_lln_cov}
\end{align}
Intuitively, $\mathcal{U}_{\mathrm{LLN,cov}}^{(k)}(\delta)$ tracks $\log\det(C)$:
it removes the trivial $m\log n_A$ growth, so sustained negative drift reflects a genuine contraction of the
representation geometry rather than a change in sample size. While the absolute scaling in
Theorem~\ref{thm:stochastic} involves an unknown constant, this constant cancels when we monitor trends.

To remove checkpoint-independent constants and emphasize temporal contraction, we report drift relative to
a baseline checkpoint $k=0$:
\begin{align}
\Delta\mathcal{G}_{\mathrm{KF}}^{(k)}(\delta)
&:= \mathcal{G}_{\mathrm{KF}}^{(k)}(\delta)-\mathcal{G}_{\mathrm{KF}}^{(0)}(\delta),\nonumber\\
\Delta\mathcal{U}_{\mathrm{LLN,cov}}^{(k)}(\delta)
&:= \mathcal{U}_{\mathrm{LLN,cov}}^{(k)}(\delta)-\mathcal{U}_{\mathrm{LLN,cov}}^{(0)}(\delta).
\label{eq:diag_deltas}
\end{align}

\subsubsection{Online monitoring procedure}
\label{subsec:method4:monitoring}

To monitor the health of the representation space continuously during training, we utilize a partial observation strategy. At each checkpoint $k$, we compute the sub-Gram matrix $G_A^{(k)}$ on a fixed observed block size $n_A>m$. This allows us to evaluate both the Track-I (deterministic) and Track-II (stochastic) diagnostics using only the summary statistics $\big(G_A^{(k)},n_A,n_k\big)$, avoiding the prohibitive cost of full eigendecomposition.

Algorithm~\ref{alg:sigma_ub} details the computational steps. In practice, the $\log\det(\cdot)$ terms are computed efficiently via Cholesky decomposition on the shifted positive semi-definite (PSD) matrices (see Appendix~\ref{app:sigma_ub}).

\begin{algorithm}[t]
\caption{\sgma-UB Collapse Monitoring (upper-bound diagnostics)}
\begin{algorithmic}[1] \label{alg:sigma_ub}
\REQUIRE $\{G_A^{(k)}\}_{k=0}^K$, counts $\{n_k\}_{k=0}^K$, fixed $n_A$, and hyperparameters $\delta>0$, $\rho>0$.
\STATE Define $\textsc{LogDet}_\delta(X):=\log\det(X+\delta I_m)$.
\FOR{$k=0,\dots,K$}
  \STATE $\widehat{\beta}_k \gets (n_k-n_A)\rho$
  \STATE $S_k \gets \textsc{LogDet}_\delta(G_A^{(k)})$
  \STATE $\mathcal{G}_{\mathrm{KF}}^{(k)} \gets \textsc{LogDet}_\delta(G_A^{(k)}+\widehat{\beta}_k I_m) - m\log(\widehat{\beta}_k+\delta)$
  \STATE $\mathcal{U}_{\mathrm{LLN,cov}}^{(k)} \gets S_k - m\log n_A$
  \IF {$k=0$}
  \STATE store $(\mathcal{G}_{\mathrm{KF}}^{(0)},\mathcal{U}_{\mathrm{LLN,cov}}^{(0)})$
  \ELSE
  \STATE compute drifts $\Delta\mathcal{G}_{\mathrm{KF}}^{(k)}$, $\Delta\mathcal{U}_{\mathrm{LLN,cov}}^{(k)}$ via \eqref{eq:diag_deltas}
  \ENDIF
\ENDFOR
\end{algorithmic}
\textbf{Return:} $\Delta\mathcal{G}_{\mathrm{KF}}^{(K)}$, $\Delta\mathcal{U}_{\mathrm{LLN,cov}}^{(K)}$
\end{algorithm}

Both metrics are designed to decrease as the eigenvalues of $G_A^{(k)}$ contract, becoming increasingly sensitive as the spectrum compresses toward the regularization floor $\delta$. A sustained negative drift in either $\Delta\mathcal{G}_{\mathrm{KF}}^{(k)}$ or $\Delta\mathcal{U}_{\mathrm{LLN,cov}}^{(k)}$ signals the onset of collapse, while saturation at the floor suggests the model has reached a late-stage, rank-deficient state.

The dual-track design provides complementary theoretical insights. $\Delta\mathcal{G}_{\mathrm{KF}}$ acts as a \textbf{mathematically conservative bound}. Being deterministic, it guarantees a valid upper bound on collapse given a fixed tail-energy budget. However, its theoretical sharpness degrades when the embedding space is highly \textit{anisotropic}—i.e., dominated by a few prominent eigen-directions. In such cases, the scalar energy cap $\widehat{\beta}_k$ significantly overestimates the unobserved tail contribution, potentially masking early signs of contraction.

Conversely, $\Delta\mathcal{U}_{\mathrm{LLN,cov}}$ serves as a \textbf{statistical estimator}. By leveraging the Law of Large Numbers (LLN), it corrects for dataset size to estimate the intrinsic geometric density. Its limitation is statistical: if the underlying data distribution is highly complex or heavy-tailed relative to the block size $n_A$, the LLN convergence may be slow, introducing estimation variance. Consequently, a divergence where Track-II plummets while Track-I remains stable is often an informative signature of \textit{early collapse}—indicating that the geometry is contracting (visible to the sensitive probe) but has not yet violated the worst-case energy budget (Track-I).

}{\section{Methodology}}
\IfFileExists{sections/experiment.tex}{\section{Experiments}
\label{sec:experiments}

We empirically stress-test \emph{recursive synthetic self-training} under stable evaluation conditions to pinpoint \emph{where} collapse may arise in the data cycling loop. The central question is not whether recursion can degrade outputs, but whether the observed geometric contraction sufficiently represents and tracks the collapse mechanism across protocols. Throughout, we track collapse using the two \sgma-UB diagnostics introduced in Section~3.4, computed on a frozen evaluation protocol so that measured drift is attributable to model evolution rather than evaluation drift.

\smallskip

All experiments in the main text are run and trained on a single controlled benchmark dataset (\textsc{TECH})  from DecayBench. We focus on TECH (software \& technical domain contextual data) for two practical reasons: (i) it provides a large, heterogeneous set of prose spanning conceptual guides, tutorials, and operational documentation; and (ii)it provides a fixed evaluation prompt bank that can be reused unchanged across checkpoints, ensuring the distribution does not drift due to uncontrolled prompt-relevent variables. We construct a real TECH corpus $R$ as a collection of $N=1000$ fixed-length \emph{64-token contexts}. 
 Evaluation uses a \emph{frozen TECH prompt bank} $P$ with stable prompt IDs and a fixed bucket taxonomy (Creative, Divergent, Analogy/ELI5, What-if, Neutral). Every checkpoint is evaluated on the \emph{same} prompt IDs under \emph{fixed decoding} and embedded with a \emph{frozen} sentence encoder. This prevents “evaluation drift” (where changing prompts/decoding would create apparent metric changes unrelated to the model). To ensure we can form stable Gram matrices with embedding dimension $m=384$, we generate multiple stochastic responses per prompt at each checkpoint; the full evaluation protocol and seed schedule are specified in Appendix~\ref{app:exp_card_tech}.

\smallskip

At generation $g\ge 1$, we build a fresh synthetic pool $S^{(g)}$ by prompting the previous-generation model with each real context $x\in R$.  This produces one synthetic continuation per training context, so $|S^{(g)}|=|R|$ and token budgets remain comparable across generations. The training data at generation $g$ is a constant-budget mixture
of both \emph{synthetic} and \emph{real} data: a fraction $(1-\alpha)$ is drawn from $S^{(g)}$ and a fraction $\alpha$ from $R$, with the total number of 64-token training blocks held fixed to $|R|$.

\smallskip
\noindent\textbf{Two recursion settings.}
We study two matched settings that share similar regeneration protocol and mixture rule and differ only in whether model weights are reset between generations.

\emph{S1: restart-from-base (data recursion only).}
Each generation regenerates a fresh synthetic data pool, but training always restarts from the same base model at initial state:
\begin{align}
    M^{(g)} \leftarrow \mathrm{Finetune} \big(M^{(0)},\, D^{(g)}(\alpha)\big).
\end{align}
S1 isolates \emph{the impact of increasing synthetic data share in LLM training}: while the synthetic data distribution shifts across generations, there is no cumulative weight drift.

\emph{S2: true recursion (data + weight recursion).}
In this setting, we \emph{do not restart} from a fixed base checkpoint. Instead, we carry the model
weights forward across generations and finetune iteratively under the same regeneration and mixture rule:
\begin{align}
M^{(g)} \leftarrow \mathrm{Finetune}\big(M^{(g-1)},\, D^{(g)}(\alpha)\big).
\end{align}
S2 is included because it matches the \emph{real-world self-consuming pipeline}: each generation both
\emph{produces} the next synthetic data and \emph{inherits} the parameters shaped by earlier generations.
This induces a feedback loop that can magnify small changes over time. In addition, S2 and S1 comparision isolates the incremental impact of \emph{weight recursion}: it tests whether repeatedly updating parameters on recursively generated data produces extra (and possibly accelerating) contraction, beyond the drift caused by \emph{data feedback alone} under restart-from-base training.

\begin{figure*}[t]
  \centering
  \includegraphics[width=0.90\textwidth]{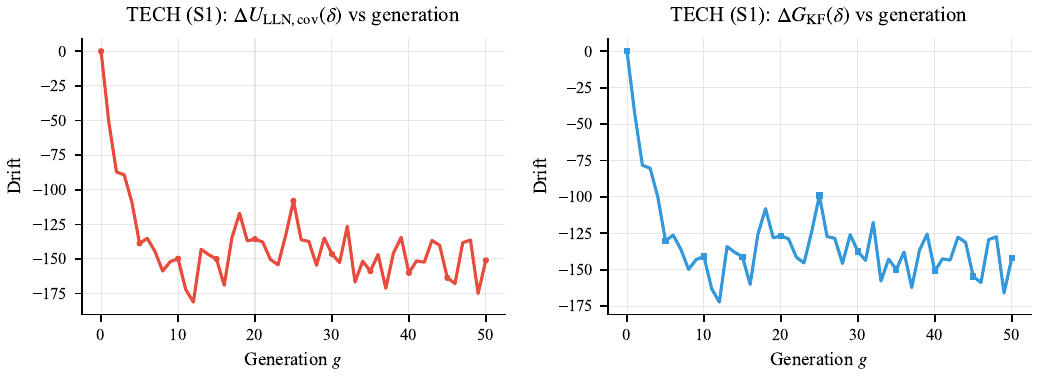}
  \caption{
  \textbf{TECH, Setting S1 (restart-from-base): drift vs.\ generation under both SIGMA-UB tracks.}
  \emph{Top:} Track~II $\Delta U_{\mathrm{LLN,cov}}(\delta)$.
  \emph{Bottom:} Track~I $\Delta G_{\mathrm{KF}}(\delta)$.
  Values are baseline drifts relative to the base checkpoint ($g=0$).
  S1 isolates \emph{data recursion only}: synthetic text is regenerated each generation, but training restarts from the same base checkpoint.
  Appendix~\ref{app:plot_tables_tech} reports the exact per-generation table used to render this figure.
  }
  \label{fig:tech_s1_tracks}
\end{figure*}

\begin{figure*}[t]
  \centering
  \includegraphics[width=0.90\textwidth]{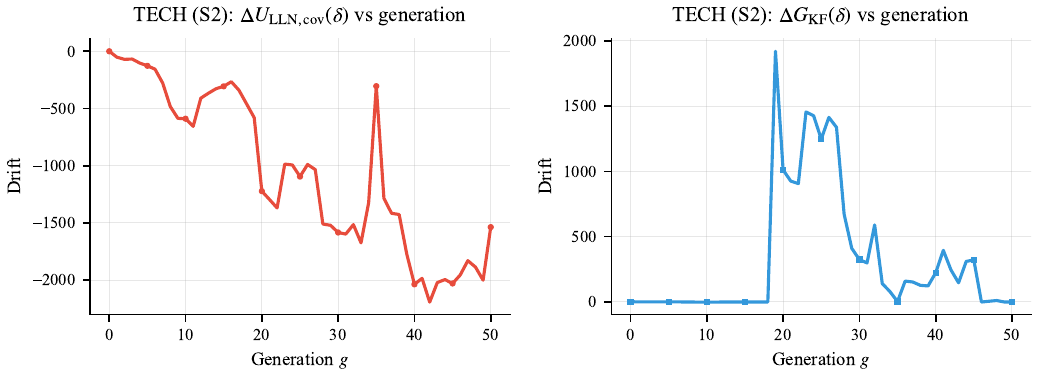}
  \caption{
  \textbf{TECH, Setting S2 (true recursion): drift vs.\ generation under both \sgma-UB tracks.}
  \emph{Top:} Track~II $\Delta U_{\mathrm{LLN,cov}}(\delta)$.
  \emph{Bottom:} Track~I $\Delta G_{\mathrm{KF}}(\delta)$.
  Values are baseline drifts relative to the base checkpoint ($g=0$).
  S2 compounds \emph{data recursion + weight recursion}: each generation regenerates synthetic text and continues training from the previous generation weights.
  Appendix~\ref{app:plot_tables_tech} reports the exact per-generation table used to render this figure.
  }
  \label{fig:tech_s2_tracks}
\end{figure*}

\begin{table*}[ht!]
\centering
\caption{TECH summary statistics (S1 vs S2). Final drifts and OLS slopes are computed from the plotted drift curves.}
\label{tab:tech_s1s2_summary}
\small
\begin{tabular}{lrrrrrr}
\toprule
Setting & g\_max & DeltaU\_final & slope\_LLN,cov & DeltaG\_final & slope\_KF & Track agreement? \\
\midrule
S1 & 50 & -150.986 & -0.941 & -142.051 & -0.918 & yes (both negative) \\
S2 & 50 & -1536.562 & -42.621 & -2.426 & 2.350 & no (Track\textasciitilde{}II collapses; Track\textasciitilde{}I does not) \\
\bottomrule
\end{tabular}
\end{table*}

\noindent\textbf{Result (TECH; S1 vs.\ S2).}
Restart-from-base recursion (S1) exhibits a mild but consistently negative drift under both tracks through $g_{\max}=50$ (final drifts $\approx-151$ and $\approx-142$), consistent with gradual contraction induced by the evolving synthetic pool even when weights are reset each generation. Under true recursion (S2), Track~II contracts by more than an order of magnitude (final $\approx-1537$; slope $\approx-42.6$), indicating substantially accelerated observed-spectrum contraction when weights are carried across generations. Track~I, in contrast, remains near zero and can even trend slightly positive over the same window; this reflects that the KF envelope is governed by a worst-case tail-energy budget rather than the observed sub-spectrum itself. Accordingly, Track~II/Track~I separation is an intended diagnostic signature of \sgma-UB and is operationally useful for flagging “early collapse” regimes where the observed spectrum has contracted sharply while the conservative envelope remains permissive.

\noindent For a more in-depth discussion, in the S2 loop (true recursion), early generations behave like self-distillation: the model is repeatedly trained to imitate its own sampled continuations, which quietly removes low-probability semantic/style alternatives even before outputs look obviously repetitive, so $\Delta\mathcal{U}_{\mathrm{LLN,cov}}(\delta)$ drifts negative from the start. As weight carryover accumulates, the model’s conditional distributions sharpen and fixed top-$p$/temperature sampling becomes \emph{effectively} more deterministic, so around $g\!\approx\!20$ the synthetic pool begins losing whole modes and $\Delta\mathcal{U}_{\mathrm{LLN,cov}}(\delta)$ drops much faster (Fig.~2). 

\noindent In contrast, $\Delta\mathcal{G}_{\mathrm{KF}}(\delta)$ reports the observed geometry only relative to a worst-case ``unseen energy'' budget, so modest early diversity loss can be masked and the curve stays near zero. The large positive excursion near $g\!\approx\!20$ is consistent with collapse concentrating probability mass into a few high-confidence templates. This can transiently \emph{inflate} a small number of dominant embedding directions (higher-energy but lower-diversity responses) and thus increase $\mathcal{G}_{\mathrm{KF}}$ even while global diversity is deteriorating. The resulting separation between $\Delta\mathcal{U}_{\mathrm{LLN,cov}}(\delta)$ and $\Delta\mathcal{G}_{\mathrm{KF}}(\delta)$ is therefore an interpretable early-collapse signature: breadth is shrinking first, and the conservative envelope only tightens once recursion crosses a tipping point.}{\section{Experiment}}
\IfFileExists{sections/conclusion.tex}{\section{Conclusion} \label{sec:conclusion}

In this work, we established \sgma as a scalable framework for monitoring the onset of model collapse without the prohibitive cost of full eigen-decomposition. By deriving rigorous spectral inequalities, we demonstrated that the health of an LLM’s representation space can be accurately estimated using only a sub-sample of the embedding Gram matrix, effectively decoupling monitoring costs from the total corpus size.

Our empirical analysis of recursive loops reveals a critical distinction between data-only recursion (S1) and true recursion with weight carryover (S2). We showed that while data recursion induces gradual drift, the compounding effect of weight updates accelerates geometric contraction by orders of magnitude. Crucially, the divergence between our conservative deterministic envelope (Track I) and the sensitive stochastic scaling metric (Track II) serves as a diagnostic signature for early-stage collapse, often preceding surface-level repetition artifacts.

\paragraph{Future Work} 
This paper marks a rigorous first step in quantifying the static geometry of model collapse. However, the recursive training loop is inherently a dynamical process--a trajectory of distributions evolving over time. Our future work aims to transition from \textit{monitoring} states to \textit{predicting} dynamics by viewing the LLM training loop through the lens of \textbf{Koopman Operator Theory}~\cite{brunton2021modernkoopmantheorydynamical}. 

While the state-space evolution of model parameters is highly non-linear and opaque, the Koopman operator lifts these dynamics into an infinite-dimensional function space where the evolution becomes linear. This powerful shift allows us to deploy the robust tools of functional analysis and ergodic theory to analyze the system's spectral properties. By characterizing the eigenvalues of the Koopman operator associated with the recursive training map, we hope to rigorously predict asymptotic behaviors--such as the rate of convergence to degenerate fixed points--and identify stable ``eigen-distributions'' that resist collapse. This approach promises not only to demystify the black-box nature of iterative learning but also to bridge the gap between modern dynamical systems theory and the empirical realities of foundation model maintenance.
}{\section{Conclusion}}


\section*{Impact Statement}

This paper advances the understanding of Large Language Models and the field of Machine Learning. There are many potential societal consequences
of our work, none of which we feel must be specifically
highlighted here.

\bibliographystyle{icml2026}
\bibliography{ICML_Model_Collapse_Citation.bib}

@book{Anderson_2003,
place={Hoboken, N.J},
title={An introduction to multivariate statistical analysis},
publisher={Wiley-Interscience},
author={Anderson, T. W.},
isbn={978-0-471-36091-9},
pages = {87-88},
url={https://www.wiley.com/en-us/An+Introduction+to+Multivariate+Statistical+Analysis%2C+3rd+Edition-p-9780471360919},
year={2003}}

@article{brunton2021modernkoopmantheorydynamical,
author = {Brunton, Steven L. and Budi\v{s}i\'{c}, Marko and Kaiser, Eurika and Kutz, J. Nathan},
title = {Modern Koopman Theory for Dynamical Systems},
journal = {SIAM Review},
volume = {64},
number = {2},
pages = {229-340},
year = {2022},
doi = {10.1137/21M1401243}}

@inproceedings{NIPS2014_b7866697,
 author = {Levy, Omer and Goldberg, Yoav},
 booktitle = {Advances in Neural Information Processing Systems},
 editor = {Z. Ghahramani and M. Welling and C. Cortes and N. Lawrence and K.Q. Weinberger},
 pages = {},
 publisher = {Curran Associates, Inc.},
 title = {Neural Word Embedding as Implicit Matrix Factorization},
 url = {https://proceedings.neurips.cc/paper_files/paper/2014/file/b78666971ceae55a8e87efb7cbfd9ad4-Paper.pdf},
 volume = {27},
 year = {2014}
}

@article{10.1162/153244302760200687,
author = {Lodhi, Huma and Saunders, Craig and Shawe-Taylor, John and Cristianini, Nello and Watkins, Chris},
title = {Text classification using string kernels},
year = {2002},
issue_date = {3/1/2002},
publisher = {JMLR.org},
volume = {2},
issn = {1532-4435},
url = {https://doi.org/10.1162/153244302760200687},
doi = {10.1162/153244302760200687},
journal = {J. Mach. Learn. Res.},
month = mar,
pages = {419–444},
numpages = {26},
keywords = {text classification, string subsequence kernel, kernels and support vector machines, approximating kernels}
}

@article{10.5555/894170,
author = {Lanckriet, Gert R. G. and Cristianini, Nello and Bartlett, Peter and Ghaoui, Laurent El and Jordan, Michael I.},
title = {Learning the Kernel Matrix with Semidefinite Programming},
year = {2004},
issue_date = {12/1/2004},
publisher = {JMLR.org},
volume = {5},
issn = {1532-4435},
journal = {J. Mach. Learn. Res.},
month = dec,
pages = {27–72},
numpages = {46}
}

@misc{shumailov2024,
      title={The Curse of Recursion: Training on Generated Data Makes Models Forget}, 
      author={Ilia Shumailov and Zakhar Shumaylov and Yiren Zhao and Yarin Gal and Nicolas Papernot and Ross Anderson},
      year={2024},
      eprint={2305.17493},
      archivePrefix={arXiv},
      primaryClass={cs.LG},
      url={https://arxiv.org/abs/2305.17493}, 
}

@article{Shumailov2024-dj,
author={Shumailov, Ilia and Shumaylov, Zakhar and Zhao, Yiren and Papernot, Nicolas and Anderson, Ross and Gal, Yarin},
title={AI models collapse when trained on recursively generated data},
journal={Nature},
year={2024},
month={Jul},
day={01},
volume={631},
number={8022},
pages={755-759},
issn={1476-4687},
doi={10.1038/s41586-024-07566-y},
url={https://doi.org/10.1038/s41586-024-07566-y}
}

@misc{wang2025llmwebdynamicstracing,
      title={{LLM Web Dynamics: Tracing Model Collapse in a Network of LLMs}}, 
      author={Tianyu Wang and Akira Horiguchi and Lingyou Pang and Carey E. Priebe},
      year={2025},
      eprint={2506.15690},
      archivePrefix={arXiv},
      primaryClass={cs.LG},
      url={https://arxiv.org/abs/2506.15690}, 
}

@inproceedings{
alemohammad2023selfconsuminggenerativemodelsmad,
title={Self-Consuming Generative Models Go {MAD}},
author={Sina Alemohammad and Josue Casco-Rodriguez and Lorenzo Luzi and Ahmed Imtiaz Humayun and Hossein Babaei and Daniel LeJeune and Ali Siahkoohi and Richard Baraniuk},
booktitle={The Twelfth International Conference on Learning Representations},
year={2024},
url={https://openreview.net/forum?id=ShjMHfmPs0}
}

@inproceedings{
dohmatob2024strongmodelcollapse,
title={Strong Model Collapse},
author={Elvis Dohmatob and Yunzhen Feng and Arjun Subramonian and Julia Kempe},
booktitle={The Thirteenth International Conference on Learning Representations},
year={2025},
url={https://openreview.net/forum?id=et5l9qPUhm}
}

@misc{yoon2025modelcollapseselfconsumingchain,
title={Model Collapse in the Chain of Diffusion Finetuning: A Novel Perspective from Quantitative Trait Modeling},
author={Youngseok Yoon and Dainong Hu and Iain Weissburg and Yao Qin and Haewon Jeong},
year={2025},
url={https://openreview.net/forum?id=P5UETqZXqT}
}

@inproceedings{
seddik2024badtrainingsyntheticdata,
title={How bad is training on synthetic data? A statistical analysis of language model collapse},
author={Mohamed El Amine Seddik and Suei-Wen Chen and Soufiane Hayou and Pierre Youssef and Merouane Abdelkader DEBBAH},
booktitle={First Conference on Language Modeling},
year={2024},
url={https://openreview.net/forum?id=t3z6UlV09o}
}

@misc{zhu2025synthesizetextdatamodel,
      title={How to Synthesize Text Data without Model Collapse?}, 
      author={Xuekai Zhu and Daixuan Cheng and Hengli Li and Kaiyan Zhang and Ermo Hua and Xingtai Lv and Ning Ding and Zhouhan Lin and Zilong Zheng and Bowen Zhou},
      year={2025},
      eprint={2412.14689},
      archivePrefix={arXiv},
      primaryClass={cs.CL},
      url={https://arxiv.org/abs/2412.14689}, 
}

@misc{pang2025unsupervisedconformal,
      title={Unsupervised Conformal Inference: Bootstrapping and Alignment to Control LLM Uncertainty}, 
      author={Lingyou Pang and Lei Huang and Jianyu Lin and Tianyu Wang and Akira Horiguchi and Alexander Aue and Carey E. Priebe},
      year={2025},
      eprint={2509.23002},
      archivePrefix={arXiv},
      primaryClass={stat.ML},
      url={https://arxiv.org/abs/2509.23002}, 
}

@misc{pang2025tamingvariability,
      title={Taming Variability: Randomized and Bootstrapped Conformal Risk Control for LLMs}, 
      author={Lingyou Pang and Lei Huang and Jianyu Lin and Tianyu Wang and Alexander Aue and Carey E. Priebe},
      year={2025},
      eprint={2509.23007},
      archivePrefix={arXiv},
      primaryClass={stat.ME},
      url={https://arxiv.org/abs/2509.23007}, 
}

@inproceedings{li-etal-2016-diversity,
    title = "A Diversity-Promoting Objective Function for Neural Conversation Models",
    author = "Li, Jiwei  and
      Galley, Michel  and
      Brockett, Chris  and
      Gao, Jianfeng  and
      Dolan, Bill",
    editor = "Knight, Kevin  and
      Nenkova, Ani  and
      Rambow, Owen",
    booktitle = "Proceedings of the 2016 Conference of the North {A}merican Chapter of the Association for Computational Linguistics: Human Language Technologies",
    month = jun,
    year = "2016",
    address = "San Diego, California",
    publisher = "Association for Computational Linguistics",
    url = "https://aclanthology.org/N16-1014/",
    doi = "10.18653/v1/N16-1014",
    pages = "110--119"
}

@article{CORMODE200558,
title = {An improved data stream summary: the count-min sketch and its applications},
journal = {Journal of Algorithms},
volume = {55},
number = {1},
pages = {58-75},
year = {2005},
issn = {0196-6774},
doi = {https://doi.org/10.1016/j.jalgor.2003.12.001},
url = {https://www.sciencedirect.com/science/article/pii/S0196677403001913},
author = {Graham Cormode and S. Muthukrishnan}
}

@inproceedings{article111,
    title = "Automatic validation of terminology translation consistency with statistical method",
    author = "Itagaki, Masaki  and
      Aikawa, Takako  and
      He, Xiaodong",
    editor = "Maegaard, Bente",
    booktitle = "Proceedings of Machine Translation Summit XI: Papers",
    month = sep # " 10-14",
    year = "2007",
    address = "Copenhagen, Denmark",
    url = "https://aclanthology.org/2007.mtsummit-papers.36/"
}

\clearpage
\appendix
\onecolumn
\IfFileExists{sections/math_app.tex}{\section{Mathematical Deductions and Proof of Theorems} \label{app:math}
We Adopt the notations from section \ref{sec:method4:setup}, for the Gram matrix $G^{(k)}$, if $\mathrm{rank}(M^{(k)}) < m$ (e.g., if $n_k < m$ or the columns of $M^{(k)}$ are linearly dependent), then $G^{(k)}$ is singular and $\det(G^{(k)})=0$. In practice, we usually have $n_k \gg m$, and the Cauchy–Binet formula for Gram matrices yields
\begin{align} \label{eq:CB}
    \det(G^{(k)})\ =\ \sum_{J\subset[n_k],\,|J|=m}\ \det(M^{(k)}_J)^2, 
\end{align}
where $M^{(k)}_J$ is the $m\times m$ submatrix formed by columns indexed by $J$. Recall from Section~\ref{sec:prelim} that computing the full Gram matrix can be computationally prohibitively expensive. Instead, we compute the log-determinant of the sub-Gram matrix, formed by selecting a subset of the columns in the embedding matrix.

\begin{proof}[Proof of Theorem \ref{thm:deterministic}]
To prove the lower bound, notice that $A^{(k)}$ is a sub-matrix of $M^{(k)}$. By the Cauchy Binet formula \eqref{eq:CB}, the sum defining $\det(G_A^{(k)})$ is a subset of the sum defining $\det(G^{(k)})$. This directly implies
\begin{align} \label{eq:CB-half_bound}
    \det(G_A^{(k)}) \leq& \det(G^{(k)}).
\end{align}
For the second half of the inequality, we use Weyl’s inequality and Ky Fan dominance and for each $i$:
\begin{align}
    \lambda_i(G^{(k)})\ \le\ \lambda_i(G_A^{(k)})+\lambda_1(G_B^{(k)}). 
\end{align}
Hence, with $\beta_k:=\lambda_{\max}(G_B^{(k)})$, we have:
\begin{align}
\label{eq:psd-majorant}
    G^{(k)} \preceq&\ G_A^{(k)}+\beta_k I_m,\quad \det(G^{(k)}) \leq \prod_{i=1}^m \left(\lambda_i(G_A^{(k)})+\beta_k\right).
\end{align}
Combining the lower bound in \eqref{eq:CB-half_bound} and the upper bound in \eqref{eq:psd-majorant} yields the stated result.
\end{proof}

The Cauchy-Binet formula, as well as the bound in Eq.~\eqref{eq:psd-majorant}, opens a window for us to bound the determinant of the original Gram matrix. Alas, the upper bound is not sharp because there is no way for us to extract information from the columns that we do not choose.

However, in practice, this is usually not the case because the dataset we are working on typically comes from a consistent context. If we use the analogy of viewing the Gram matrix as a book, then the chapters we read (i.e., $G_A^{(k)}$) are usually a good indication of the chapters we do not read (i.e., $G_B^{(k)}$), even though mathematically the unseen chapters could contain content that is completely irrelevant.

To reflect such realistic circumstances, we assume that the embedding column vectors are drawn from an implicit distribution (probability measure) $\mu$. Denote $v$ as an arbitrary column (where $v$ can be seen as a random variable) and let $C = \mathbb{E}[vv^{\top}]$ be the covariance matrix. With this additional assumption, We proceed with the asymptotic argument.

\begin{proof}[Proof of Theorem \ref{thm:stochastic}]
We leverage the concentration of the sample covariance matrix. Let $\hat{C}_{n_k} = \frac{1}{n_k} G^{(k)}$ and $\hat{C}_{n_A} = \frac{1}{n_A} G^{(k)}_A$. By the properties of determinant scaling, we have:
\begin{align}
    \det(G^{(k)}) &= \det(n_k \hat{C}_{n_k}) = n_k^m \det(\hat{C}_{n_k}), \nonumber \\
    \det(G^{(k)}_A) &= \det(n_A \hat{C}_{n_A}) = n_A^m \det(\hat{C}_{n_A}).
\end{align}
We consider the ratio:
\begin{align}
    \frac{\det(G^{(k)})}{\det(G^{(k)}_A)} = \left( \frac{n_k}{n_A} \right)^m \frac{\det(\hat{C}_{n_k})}{\det(\hat{C}_{n_A})}.
\end{align}
By the Strong Law of Large Numbers (or finite-sample concentration inequalities for covariance matrices), both $\hat{C}_{n_k}$ and $\hat{C}_{n_A}$ converge to the population covariance $C$ as $n_k, n_A \to \infty$.
Specifically, the continuous mapping theorem implies that $\det(\hat{C}_{n_k}) \xrightarrow{\mathbb{P}} \det(C)$ and $\det(\hat{C}_{n_A}) \xrightarrow{\mathbb{P}} \det(C)$. This implies that for any $\epsilon > 0$, with probability approaching 1:
\begin{align}
    1 - \epsilon \le \frac{\det(\hat{C}_{n_k})}{\det(\hat{C}_{n_A})} \le 1 + \epsilon.
\end{align}
Letting $K = 1+\epsilon$ establishes the upper bound with high probability:
\begin{align}
    \det(G^{(k)}) \le K \left( \frac{n_k}{n_A} \right)^m \det(G^{(k)}_A).
\end{align}
This confirms that the determinant of the full Gram matrix scales with the determinant of the sub-sample according to the geometric factor $(n_k/n_A)^m$, up to a stochastic constant $K$ that approaches 1.
\end{proof}

This bound serves as an asymptotic scaling law rather than a point-wise estimator of $\det(G^{(k)})$. It gives us a theoretical foundation for monitoring model collapse with partial data, assuming the bare minimum: that the data we are processing comes from a consistent context.

\begin{proof}[Proof of Theorem \ref{thm:err_stochastic}]
    Since we assume that the distribution of column vectors have a finite fourth moment, we know
    \begin{align} \label{eq:app:thm3:z}
        Z:=X^TC^{-1}X = \sum_{i,j=1}^m (C^{-1})_{ij}X_iX_j
    \end{align}
has a finite variance. i.e., $\sigma$ is finite. In fact, $Z$ is the Mahalanobis distance. It plays an important role in this proof. For lighter notation, we write $n \equiv n_k$ and  $G_t \equiv G_t^{(k)}$ the gram matrix formed by selecting $t$ columns from $M^{(k)}$. This way $G^{(k)} \equiv G_n$ and $G^{(k)}_A \equiv G_{n_A}$.

For arbitrary positive integer $N$, consider $\hat{C}_N := \frac{1}{N} \sum_{i=1}^N X_i X_i^\top$ the sample covariance matrix and by our notation, $G_N = N\hat{C}_N$. By the Central Limit Theorem for Sample Covariance Matrices (Theorem 3.4.4, \cite{Anderson_2003}), $\sqrt{N}(\hat{C}_N - C)$ converges in distribution to a Gaussian random matrix $M$ with mean zero:
\begin{align}
    \sqrt{N}(\hat{C}_N - C) \xrightarrow{d} M.
\end{align}
We know that $\hat{C}_N - C = O_p(N^{-1/2})$, and we shall see soon that what exactly is $M$ does not matter. Consider the function $f: \mathbb{S}_{++}^m \to \mathbb{R}$ defined by $f(S) = \log \det(S)$. This function is continuously differentiable at $S=C$, with the gradient given by $\nabla f(C) = C^{-1}$. Using Taylor expansion, we get
\begin{align}
    f(\hat{C}_N) = f(C) + \langle \nabla f(C), \hat{C}_N - C \rangle + R_N,
\end{align}
where $R_N$ is the remainder term. Since $\hat{C}_N - C = O_p(N^{-1/2})$, the remainder term is of order $O_p(N^{-1})$. Substituting the gradient and the inner product for matrices $\langle A, B \rangle = \text{tr}(A^\top B)$:
\begin{align} \label{eq:app:thm3:taylor}
    \log \det \hat{C}_N &= \log \det C + \text{tr}\left( C^{-1} (\hat{C}_N - C) \right) + O_p(N^{-1}).
\end{align}
By the definition of $\hat{C}_N$, the trace in \eqref{eq:app:thm3:taylor} can be written as
\begin{align} \label{eq:app:thm3:trace}
    \text{tr}\left( C^{-1} (\hat{C}_N - C) \right) &= \text{tr}\left( C^{-1} \left( \frac{1}{N} \sum_{i=1}^N X_i X_i^\top - C \right) \right) \nonumber \\
    &= \frac{1}{N} \sum_{i=1}^N \text{tr}(C^{-1} X_i X_i^\top) - \text{tr}(C^{-1} C) \nonumber \\
    &=\frac{1}{N} \sum_{i=1}^N \text{tr}(C^{-1} X_i X_i^\top) - m.
\end{align}
Now use the cyclic property of trace and combine \eqref{eq:app:thm3:taylor}, \eqref{eq:app:thm3:trace}, we arrive at
\begin{align}
    \log \det \hat{C}_N &= \log \det C + \frac{1}{N} \sum_{i=1}^N (X_i^\top C^{-1} X_i - m) + O_p(N^{-1}).
\end{align}
Inspired by the outcome and \eqref{eq:app:thm3:z}, define the scalar random variable $Z_i := X_i^\top C^{-1} X_i - m$. Since $X_i$ are i.i.d., $Z_i$ are also i.i.d. with mean $\mathbb{E}[Z_i] = 0$ and variance $\text{Var}(Z_i) = \sigma^2$. We can now rewrite the log-determinant of the Gram matrix $G_N = N \hat{C}_N$ as:
\begin{align} \label{eq:app:thm3:anchor}
    \log \det G_N = m \log N + \log \det C + \frac{1}{N} \sum_{i=1}^N Z_i + O_p(N^{-1}),
\end{align}
because the all matrices here are $m$-by-$m$, which explains the $m \log N$ term.
With \eqref{eq:app:thm3:anchor} being established, Consider the quantity of interest from Theorem 3:
\begin{align}
    \Delta_k := \log \det G_n - \log \det G_{n_A} - m \log \left(\frac{n_k}{n_A}\right).
\end{align}
Directly applying \eqref{eq:app:thm3:anchor} to $\Delta_k$ and we get
\begin{align}
    \Delta_k &= \left( \log \det C + \frac{1}{n_k} \sum_{i=1}^{n_k} Z_i \right) - \left( \log \det C + \frac{1}{n_A} \sum_{i=1}^{n_A} Z_i \right) + O_p(n_A^{-1}) \nonumber \\
    &= \sum_{i=1}^{n_k} \frac{Z_i}{n_k} - \sum_{i=1}^{n_A} \frac{Z_i}{n_A} + O_p(n_A^{-1}).
\end{align}
We then split the sum over $n_k$ into the observed part ($1 \dots n_A$) and the unobserved part ($n_A+1 \dots n_k$) so each summand is independent:
\begin{align}
    \Delta_k &= \left( \frac{1}{n_k} \sum_{i=1}^{n_A} Z_i + \frac{1}{n_k} \sum_{j=n_A+1}^{n_k} Z_j \right) - \frac{1}{n_A} \sum_{i=1}^{n_A} Z_i + O_p(n_A^{-1}) \nonumber \\
    &= \underbrace{\left( \frac{1}{n_k} - \frac{1}{n_A} \right) \sum_{i=1}^{n_A} Z_i}_{\text{Term 1}} + \underbrace{\frac{1}{n_k} \sum_{j=n_A+1}^{n_k} Z_j}_{\text{Term 2}} + O_p(n_A^{-1}).
\end{align}
Let $W_1 = \sum_{i=1}^{n_A} Z_i$ and $W_2 = \sum_{j=n_A+1}^{n_k} Z_j$.
By the standard Central Limit Theorem applied to i.i.d. variables with finite variance $\sigma^2$:
\begin{equation}
    \frac{W_1}{\sigma \sqrt{n_A}} \xrightarrow{d} \mathcal{N}(0, 1) \quad \text{and} \quad \frac{W_2}{\sigma \sqrt{n_k - n_A}} \xrightarrow{d} \mathcal{N}(0, 1).
\end{equation}
Let $Y$ be the random variable representing the limiting distribution of the leading terms of $\Delta_k$. Since $W_1$ and $W_2$ are independent (as the sample sets are disjoint), the linear combination of their limiting distributions is a Gaussian distribution with the sum of their variances.
The asymptotic variance is calculated as follows:
\begin{align}
    \sigma_{\Delta}^2 &= \text{Var}\left( \left( \frac{1}{n_k} - \frac{1}{n_A} \right) W_1 \right) + \text{Var}\left( \frac{1}{n_k} W_2 \right) \nonumber \\
    &= \left( \frac{1}{n_k} - \frac{1}{n_A} \right)^2 n_A \sigma^2 + \frac{1}{n_k^2} (n_k - n_A) \sigma^2 \nonumber \\
    &= \sigma^2 \left[ n_A \left( \frac{1}{n_k^2} - \frac{2}{n_A n_k} + \frac{1}{n_A^2} \right) + \frac{1}{n_k} - \frac{n_A}{n_k^2} \right] \nonumber \\
    &= \sigma^2 \left[ \frac{n_A}{n_k^2} - \frac{2}{n_k} + \frac{1}{n_A} + \frac{1}{n_k} - \frac{n_A}{n_k^2} \right] \nonumber \\
    &= \sigma^2 \left( \frac{1}{n_A} - \frac{1}{n_k} \right).
\end{align}
The remainder term $O_p(n_A^{-1})$ from the Taylor expansion converges to zero in probability relative to this standard deviation, which scales as $O(n_A^{-1/2})$.
Therefore, by Slutsky's Theorem, the normalized difference converges in distribution to a standard normal variable:
\begin{equation}
    \frac{\Delta_k}{\sigma \sqrt{\frac{1}{n_A} - \frac{1}{n_k}}} \xrightarrow{d} \mathcal{N}(0, 1).
\end{equation}
This completes the proof.
\end{proof}

\section{Mathematical Details for \sgma-UB (Section~\ref{subsec:method4:ubmetrics})}
\label{app:sigma_ub}

This appendix connects Theorems~\ref{thm:deterministic}--\ref{thm:stochastic} to the two computable
\sgma-UB diagnostics in Section~\ref{subsec:method4:ubmetrics} and justifies the normalizations used for
online monitoring.

\subsection{Regularized \texorpdfstring{$\log$-determinant}{log-determinant} and PSD monotonicity}
Fix $\delta>0$. For $X,Y\in\mathbb{S}^m_{+}$, if $X\preceq Y$ then $X+\delta I_m\preceq Y+\delta I_m$ and
\begin{align}
\log\det(X+\delta I_m)\le \log\det(Y+\delta I_m),
\label{eq:app:psd_monotone}
\end{align}
since $X+\delta I_m,  Y+\delta I_m\in\mathbb{S}^m_{++}$ for any $\delta>0$ and $\log\det(\cdot)$ is Loewner-monotone on $\mathbb{S}^m_{++}$. This is the basic tool that turns PSD majorants into one-sided envelopes for the (regularized) log-volume.

\subsection{Bridge I: deterministic envelope via a tail-energy budget (Track I)}
\label{app:sigmaub:bridgeI}

\subsubsection{From Theorem~\ref{thm:deterministic} to a PSD majorant}
Recall $G^{(k)}=G_A^{(k)}+G_B^{(k)}$ with $G_B^{(k)}\succeq 0$. By Weyl's inequality,
\begin{align}
\lambda_i\!\left(G^{(k)}\right)\le \lambda_i\!\left(G_A^{(k)}\right)+\lambda_1\!\left(G_B^{(k)}\right)
\qquad (i=1,\dots,m).
\end{align}
Let $\beta_k:=\lambda_{\max}(G_B^{(k)})=\lambda_1(G_B^{(k)})$. Then, in PSD order,
\begin{align}
G^{(k)} \preceq G_A^{(k)}+\beta_k I_m.
\label{eq:app:psd_majorant_beta}
\end{align}
Shifting by $\delta I_m$ and applying \eqref{eq:app:psd_monotone} gives the (incomputable) envelope
\begin{align}
\log\det\!\big(G^{(k)}+\delta I_m\big)
\le
\log\det\!\big(G_A^{(k)}+(\beta_k+\delta)I_m\big).
\label{eq:app:kf_envelope_beta}
\end{align}

\subsubsection{Replacing $\beta_k$ by a computable budget $\widehat{\beta}_k$}
Because $G_B^{(k)}\succeq 0$, we have the bound
\begin{align}
\beta_k=\lambda_{\max}(G_B^{(k)}) \le \mathrm{tr}(G_B^{(k)}).
\label{eq:app:beta_le_trace}
\end{align}
Using $G_B^{(k)}=\sum_{j\in B} v_j^{(k)}(v_j^{(k)})^\top$ gives
\begin{align}
\mathrm{tr}(G_B^{(k)})=\sum_{j\in B}\|v_j^{(k)}\|_2^2.
\label{eq:app:trace_sum_norms}
\end{align}
Assume preprocessing enforces $\|v_j^{(k)}\|_2^2\le \rho$ for all columns.
With $n_2:=n_k-n_A$ unseen columns,
\begin{align}
\beta_k \le \widehat{\beta}_k := n_2\rho .
\label{eq:app:betahat}
\end{align}
Combining \eqref{eq:app:psd_majorant_beta} and \eqref{eq:app:betahat} yields the computable PSD majorant
\begin{align}
G^{(k)} \preceq G_A^{(k)}+\beta_k I_m \preceq G_A^{(k)}+\widehat{\beta}_k I_m .
\label{eq:app:psd_majorant_hat}
\end{align}

\subsubsection{Valid regularized envelope and the normalized gain form}
Define the computable envelope
\begin{align}
\mathcal{U}_{\mathrm{KF}}^{(k)}(\delta)
:=
\log\det\!\left(G_A^{(k)}+(\widehat{\beta}_k+\delta)I_m\right).
\label{eq:app:kf_envelope_valid}
\end{align}
Then \eqref{eq:app:psd_majorant_hat} and \eqref{eq:app:psd_monotone} imply
\begin{align}
\log\det\!\big(G^{(k)}+\delta I_m\big)\le \mathcal{U}_{\mathrm{KF}}^{(k)}(\delta).
\end{align}

To interpret this envelope across checkpoints, we normalize out the isotropic baseline
$(\widehat{\beta}_k+\delta)I_m$, whose log-determinant equals $m\log(\widehat{\beta}_k+\delta)$. Define 
\begin{align}
\mathcal{G}_{\mathrm{KF}}^{(k)}(\delta)
&:= \mathcal{U}_{\mathrm{KF}}^{(k)}(\delta)-m\log(\widehat{\beta}_k+\delta).
\label{eq:app:gkf_def}
\end{align}
Equivalently,
\begin{align}
\mathcal{G}_{\mathrm{KF}}^{(k)}(\delta)
&=
\log\det\!\left(I_m+\frac{1}{\widehat{\beta}_k+\delta}G_A^{(k)}\right)
=
\sum_{i=1}^m \log\!\left(1+\frac{\lambda_i(G_A^{(k)})}{\widehat{\beta}_k+\delta}\right).
\label{eq:app:gkf_eigform}
\end{align}
Thus, $\mathcal{G}_{\mathrm{KF}}^{(k)}(\delta)$ defines a dimensionless gain relative to the tail-budget scale: it
decreases as the eigenvalues of $G_A^{(k)}$ contract, and it is most sensitive when eigenvalues are near the $\delta$ floor.

\subsubsection{Gap control relative to the ideal (incomputable) envelope}
Define the ideal but generally incomputable envelope as
\[
\mathcal{U}^{(k)}_{\mathrm{KF},\star}(\delta)
:=\log\det\!\left(G_A^{(k)}+(\beta_k+\delta)I_m\right).
\]
Since $\widehat{\beta}_k\ge \beta_k$, by monotonicity of the log-determinant,
\begin{align}
0 \le \mathcal{U}_{\mathrm{KF}}^{(k)}(\delta)-\mathcal{U}^{(k)}_{\mathrm{KF},\star}(\delta)
&= \sum_{i=1}^m
\log\!\left(
\frac{\lambda_i(G_A^{(k)})+\widehat{\beta}_k+\delta}
{\lambda_i(G_A^{(k)})+\beta_k+\delta}
\right)
\le m\log\!\left(\frac{\widehat{\beta}_k+\delta}{\beta_k+\delta}\right).
\label{eq:app:kf_gap_bound}
\end{align}
This implies
\begin{align}
\mathcal{U}_{\mathrm{KF}}^{(k)}(\delta)-m\log(\widehat{\beta}_k+\delta)
\le
\mathcal{U}^{(k)}_{\mathrm{KF},\star}(\delta)-m\log(\beta_k+\delta).
\label{eq:app:normalization_control}
\end{align}
Thus, the normalized quantity $\mathcal{G}_{\mathrm{KF}}^{(k)}(\delta)$ remains uniformly controlled by the
(unknown) ideal normalization, even when $\widehat{\beta}_k$ is a loose bound.

\subsection{Bridge II: LLN scaling and covariance normalization (Track II)}
\label{app:sigmaub:bridgeII}

\subsubsection{LLN scaling and cancellation of unknown constants by drift}
Motivated by the law-of-large-numbers (LLN) scaling suggested by Theorem~\ref{thm:stochastic}, define
\begin{align}
\mathcal{U}_{\mathrm{LLN}}^{(k)}(\delta)
:= \log\det(G_A^{(k)}+\delta I_m) + m\log\!\left(\frac{n_k}{n_A}\right).
\label{eq:app:ulln_def}
\end{align}
When the stochastic bound is written in logarithmic form as
\[
\log\det(G^{(k)}) \;\le\; \log K + \mathcal{U}_{\mathrm{LLN}}^{(k)}(0)
\quad \text{for an unknown finite }K,
\]
then baseline differencing removes $\log K$. Accordingly, define the differenced quantity
\begin{align}
\Delta \mathcal{U}_{\mathrm{LLN}}^{(k)}(\delta)
:= \mathcal{U}_{\mathrm{LLN}}^{(k)}(\delta)-\mathcal{U}_{\mathrm{LLN}}^{(0)}(\delta).
\label{eq:app:ulln_drift}
\end{align}

\subsubsection{Covariance-normalized form}
Define the covariance-normalized statistic
\begin{align}
\mathcal{U}_{\mathrm{LLN,cov}}^{(k)}(\delta)
&:= \mathcal{U}_{\mathrm{LLN}}^{(k)}(\delta)-m\log n_k
= \log\det(G_A^{(k)}+\delta I_m) - m\log n_A.
\label{eq:app:ulln_cov_def}
\end{align}
Equivalently, 
\begin{align}
\mathcal{U}_{\mathrm{LLN,cov}}^{(k)}(\delta)
=
\log\det\!\left(\frac{1}{n_A}G_A^{(k)}+\frac{\delta}{n_A} I_m\right).
\label{eq:app:ulln_cov_scaled}
\end{align}
Under the i.i.d.\ assumption, $\frac{1}{n_A}G_A^{(k)}\to C$ as $n_A\to\infty$, so for large $n_A$,
\[
\mathcal{U}_{\mathrm{LLN,cov}}^{(k)}(\delta)\approx \log\det(C),
\]
up to the vanishing regularization $\delta/n_A$. Hence $\mathcal{U}_{\mathrm{LLN,cov}}^{(k)}(\delta)$ is
size-invariant and is the natural object to difference over checkpoints for collapse monitoring.

\subsection{Numerical computation of \texorpdfstring{$\log$-determinant}{log-determinant}}
For $S\succ 0$, compute $\log\det(S)$ via Cholesky $S=LL^\top$:
\begin{align}
\log\det(S) = 2\sum_{i=1}^m \log L_{ii}.
\end{align}
Apply this to $G_A^{(k)}+\delta I_m$ and $G_A^{(k)}+(\widehat{\beta}_k+\delta)I_m$.

}{}
\IfFileExists{sections/experiment_app.tex}{\section{Supplementary Experiments, Reproducibility, and Extended Diagnostics (TECH)}
\label{app:exp_appendix}

\subsection{Section Roadmap}
This appendix section has three goals: (i) to make the TECH recursion loop fully reproducible from scratch, (ii) to make all main-text plots and scalars auditable from exported tables, and (iii) to provide extended diagnostics that contextualize geometric contraction.

\paragraph{(i) Reproducibility from raw sources to \sgma-UB curves.}
The recursion loop is sensitive to implementation choices. We therefore document the entire pipeline for the reported TECH run: (a) how the real TECH corpus and frozen prompt bank are constructed, (b) how synthetic pools are regenerated deterministically each generation, (c) how training mixtures are formed under a constant token budget, and (d) how evaluation responses, embeddings, and \sgma-UB metrics are computed. 

\paragraph{(ii) Plot--table consistency and “single source of truth.”}
All scalar summaries reported in the main text are computed \emph{directly} from exported per-generation tables. To prevent transcription errors, we include the exact plotted points as tables in Appendix~\ref{app:plot_tables_tech}.

\paragraph{(iii) Extended diagnostics.}
Because \sgma-UB is a \emph{geometric} diagnostic computed from embedding Gram structure, we also report two
\emph{surface-form baselines} on the \emph{same} generated outputs used to form the \sgma-UB embeddings
(Appendix~\ref{app:aux_baselines}). These baselines are included for interpretability: they quantify
lexical repetition directly, and help distinguish cases where geometric contraction aligns with (or departs from)
visible degeneracy in text.

\textbf{Baseline A: Distinct-$2$ \cite{li-etal-2016-diversity}.}
Distinct-$2$ is a standard diversity score defined as the ratio
\[
\mathrm{Distinct}\text{-}2 \;=\; \frac{\#\{\text{unique bigrams}\}}{\#\{\text{total bigrams}\}},
\]
computed on the generated corpus (or per-generation batch). Lower Distinct-$2$ indicates that the model
is reusing the same short phrases more frequently, i.e., increased short-range repetition.

\textbf{Baseline B: Hashed $n$-gram HHI \cite{CORMODE200558,article111} (global concentration of repeated phrases).}
To capture longer-range and heavier-tailed repetition, we compute a Herfindahl--Hirschman Index (HHI) over
$n$-gram frequencies: if $c_j$ is the count of $n$-gram type $j$ and $p_j=c_j/\sum_\ell c_\ell$, then
\[
\mathrm{HHI}_n \;=\; \sum_j p_j^2 .
\]
Higher HHI implies that a small number of $n$-grams account for a large share of the text, i.e., strong
concentration and mode collapse at the surface level. 

Finally, we provide an optional bucket-localization slice using the frozen TECH prompt taxonomy
(Appendix~\ref{app:bucket_localization}) to identify \emph{where} contraction is most pronounced.
Importantly, these surface-form baselines are \emph{contextual, not definitional}: they are reported to help
interpret when geometric contraction coincides with familiar repetition patterns, without redefining collapse
in terms of lexical features.

\subsection{TECH controlled assets: data provenance, licensing, and construction}
\label{app:tech_data_assets}

This paper’s experiments rely on two frozen assets for TECH: (i) a real corpus $R$ of fixed-length contexts used as prompts for synthetic regeneration and as the  real-data component of mixtures, and (ii) a frozen prompt bank $P$ used for evaluation.

\paragraph{Real TECH corpus $R$.}
We collect TECH documents from permissively licensed software and technical documentation sources (e.g., The Rust Book (MIT/Apache-2.0), Kubernetes documentation (CC BY 4.0), PostgreSQL documentation (BSD-style), SQLite documentation (Public Domain), and Python documentation (PSF license)), verifying license terms per source/page. Documents are converted to plain text with boilerplate removal and code-block stripping, filtered to English, and tokenized using the OPT-125M GPT-2 BPE tokenizer to match the base model’s vocabulary. We then extract 64-token contexts using a sliding window with fixed stride and enforce minimum-length and quality filters. Finally, we remove exact duplicates (content hashes) and near-duplicates prior to selecting the final $N=1000$ contexts.

\paragraph{Frozen TECH prompt bank $P$ (stable prompt IDs and buckets).}
Independently of $R$, we construct a TECH prompt bank designed to elicit a range of response regimes (open-ended creative prompts through neutral factual prompts) while maintaining evaluation stability across checkpoints. Prompts receive persistent prompt IDs and are categorized into buckets such as Creative, Divergent, Analogy/ELI5, What-if, and Neutral. 

\subsection{Reproducibility specification for S1 and S2}
\label{app:exp_card_tech}

We summarize every operational choice that affects outcomes: model hyperparameters; synthetic regeneration and mixture; decoding; seeds; \sgma-UB computation; evaluation metrics; hardware/software; and the formal difference between S1 and S2. Unless explicitly stated, S1 and S2 share the same configuration; the only intended difference is whether weights are reset to $M^{(0)}$ (S1) or carried forward $M^{(g-1)}$ (S2).

\begin{table*}[t]
\centering
\footnotesize
\setlength{\tabcolsep}{7pt}
\begin{tabular}{llll}
\toprule
Component & S1 & S2 & Notes \\
\midrule
Base model & \texttt{facebook/opt-125m} & \texttt{facebook/opt-125m} & 125M parameters \\
Fine-tuning epochs / gen & 5 & 5 & Fixed per-generation budget \\
Batch size & 16 & 16 & Effective batch size held constant \\
Learning rate & $2\times 10^{-5}$ & $2\times 10^{-5}$ & Constant across generations \\
Optimizer & AdamW & AdamW & Weight decay $0.01$ \\
Precision & FP16 enabled & FP16 enabled & GPU training \\
\bottomrule
\end{tabular}
\caption{\textbf{ Model and training hyperparameters.} These hyperparameters are held fixed across S1 and S2 so that observed differences are attributable to \emph{data recursion vs.\ data+weight recursion}, rather than to optimization budget changes.}
\label{tab:repro_hparams}
\end{table*}

\begin{table*}[t]
\centering
\footnotesize
\setlength{\tabcolsep}{5pt}
\begin{tabular}{llll}
\toprule
Item & Value & Applies to & Notes \\
\midrule
Real fraction $\alpha$ & $0.0$ (pure synthetic) & S1, S2 & Stress-tests recursion (no real-data anchoring) \\
Context unit & 64 tokens & S1, S2 & Training blocks; constant token budget \\
Real corpus size & $\sim$1{,}000--1{,}200 contexts / domain & S1, S2 & Domain-dependent (DecayBench) \\
Prompt bank size & $\sim$163--200 prompts / domain & Eval & Frozen prompt IDs and bucket labels \\
Generations reported & $g=0,\dots,50$ (51 checkpoints) & Main text & TECH run shown in Figures~\ref{fig:tech_s1_tracks}--\ref{fig:tech_s2_tracks} \\
Synthetic pool size & $|S^{(g)}|=|R|$ & S1, S2 & One synthetic continuation per real context per generation \\
\bottomrule
\end{tabular}
\caption{\textbf{ Data generation and mixing.} We use constant-budget mixtures and, for the reported runs, set $\alpha=0$ to isolate synthetic recursion dynamics without real-data refresh.}
\label{tab:repro_data_mixing}
\end{table*}

\begin{table*}[t]
\centering
\footnotesize
\setlength{\tabcolsep}{7pt}
\begin{tabular}{llll}
\toprule
Decoding parameter & Value & Used for & Notes \\
\midrule
Sampling type & Nucleus (top-$p$) & Gen + Eval & Stochastic sampling (not greedy) \\
Temperature & 1.0 & Gen + Eval & Fixed across generations \\
Top-$p$ & 0.95 & Gen + Eval & Fixed across generations \\
Max tokens (training) & 64 & Corpus construction & 64-token blocks for training \\
Max tokens (evaluation) & 256 & Prompt-bank responses & Used to obtain sufficiently long responses\\& & & for embedding/geometry \\
\bottomrule
\end{tabular}
\caption{\textbf{ Text generation (decoding) parameters.} Decoding is frozen across checkpoints to prevent evaluation drift and to keep the generation distribution comparable over time.}
\label{tab:repro_decoding}
\end{table*}

\begin{table*}[t]
\centering
\footnotesize
\setlength{\tabcolsep}{7pt}
\begin{tabular}{llll}
\toprule
\sgma-UB component & Value & Used for & Notes \\
\midrule
Sentence encoder & \begin{tabular}{@{}c@{}}\texttt{sentence-transformers/}\\ \texttt{all-MiniLM-L6-v2}\end{tabular} & Embedding & Frozen encoder, $m=384$ \\
Regularization $\delta$ & $10^{-3}$ & Track I/II & Matches Section~3.4 conventions \\
Energy cap $\rho$ & $1.0$ & Track I & Worst-case tail-energy budget \\
Baseline & $g=0$ & Drift computation & $\Delta(\cdot)$ reported relative to base checkpoint \\
Observed block & fixed $\mathcal{I}_A$ within run & Track I/II & Prevents measurement drift across checkpoints \\
\bottomrule
\end{tabular}
\caption{\textbf{ \sgma-UB computation.} Both tracks are computed from the same frozen embeddings and the same observed index set, so drift is attributable to model evolution rather than to evaluation sampling.}
\label{tab:repro_sigmaub}
\end{table*}

\begin{table*}[t]
\centering
\footnotesize
\setlength{\tabcolsep}{15pt}
\begin{tabular}{lll}
\toprule
Metric family & Reported quantity & Notes \\
\midrule
\sgma-UB Track II & $U_{\mathrm{LLN,cov}}$ and $\Delta U_{\mathrm{LLN,cov}}$ & Sensitive trend probe (observed-spectrum contraction) \\
\sgma-UB Track I & $G_{\mathrm{KF}}$ and $\Delta G_{\mathrm{KF}}$ & Conservative envelope; can be loose or non-monotone \\
Surface proxies & distinct-2, hashed $n$-gram HHI & Computed on the \emph{same} generated outputs as \sgma-UB \\
\bottomrule
\end{tabular}
\caption{\textbf{ Evaluation metrics.} The appendix focuses on geometry diagnostics and lightweight surface-form proxies; perplexity-based baselines are excluded by design.}
\label{tab:repro_metrics}
\end{table*}

\begin{table*}[ht!]
\centering
\footnotesize
\setlength{\tabcolsep}{7pt}
\begin{tabular}{p{0.25\textwidth}p{0.65\textwidth}}
\toprule
Component & Specification \\
\midrule
Training hardware & NVIDIA GPU (server environment) \\
Metrics hardware & CPU (Mac M1 used for aggregation/plotting) \\
Core libraries & PyTorch 2.0+, Transformers 4.30+, sentence-transformers 2.2+ \\
Precision & Mixed precision (FP16) enabled during training \\
\bottomrule
\end{tabular}
\caption{\textbf{ Computational environment.} Hardware does not change the \emph{definition} of metrics but can affect throughput; determinism is enforced via fixed seeds and fixed decoding.}
\label{tab:repro_env}
\end{table*}

\begin{table*}[ht!]
\centering
\footnotesize
\setlength{\tabcolsep}{7pt}
\begin{tabular}{p{0.22\textwidth}p{0.70\textwidth}}
\toprule
Prompt bucket & Description (frozen taxonomy) \\
\midrule
Creative & Open-ended creative prompts emphasizing novelty and stylistic variation \\
Divergent & Prompts encouraging multiple alternatives, brainstorming, or broad exploration \\
Analogy / ELI5 & Prompts asking for analogies, simplifications, or explanatory reframings \\
What-if & Counterfactual or perturbation prompts (``what if we change X?'') \\
Neutral & Factual/neutral prompts with minimal stylistic pressure \\
\bottomrule
\end{tabular}
\caption{\textbf{ Prompt buckets for localization.} Bucket IDs and assignments are fixed by DecayBench; slicing by bucket does not change the evaluation distribution over time, it only reveals where contraction is concentrated within a frozen taxonomy.}
\label{tab:repro_buckets}
\end{table*}

\paragraph{Operational end-to-end loop (implementation-level clarification).}
For auditability, we summarize the precise loop that produces one point on each drift curve:
(1) regenerate $S^{(g)}$ by prompting each $x\in R$ and sampling a fixed-length continuation with fixed decoding and per-context deterministic seeds;
(2) form $D^{(g)}(\alpha)$ under a constant block budget of $|R|$;
(3) fine-tune under either S1 (initialize from $M^{(0)}$) or S2 (initialize from $M^{(g-1)}$) using fixed hyperparameters;
(4) evaluate on the frozen prompt bank $P$ using fixed decoding and per-prompt deterministic seeds to produce $n_k$ responses;
(5) embed responses with the frozen encoder, select a fixed observed index set $\mathcal{I}_A$ once per run, and compute both \sgma-UB tracks and (optionally) surface proxies.
These steps are deterministic given the experiment card above and the fixed input assets.

\subsection{Exact per-generation values for the main-text drift plots}
\label{app:plot_tables_tech}

In particular, the S1 vs.\ S2 summary in Table~\ref{tab:tech_s1s2_summary} is computed directly from these per-generation series (including the baseline point $g=0$, where drift is $0$ by definition).

\begin{table*}[ht!]
\scriptsize
\setlength{\tabcolsep}{7pt}
\begin{minipage}[t]{0.4\textwidth}
\vspace{0pt}
\begin{tabular}{rcc}
\toprule
Generation $g$ & $\Delta U_{\mathrm{LLN,cov}}(\delta)$ (Track II) & $\Delta G_{\mathrm{KF}}(\delta)$ (Track I) \\
\midrule
0 & 0.000000 & 0.000000 \\
1 & -50.937706 & -42.771322 \\
2 & -87.220752 & -78.285600 \\
3 & -89.193112 & -80.257959 \\
4 & -108.940054 & -100.004901 \\
5 & -138.689852 & -130.138891 \\
6 & -135.146696 & -126.211544 \\
7 & -144.505214 & -135.954253 \\
8 & -158.731260 & -149.796107 \\
9 & -151.887859 & -142.952706 \\
10 & -149.844314 & -140.909162 \\
11 & -171.990854 & -163.055701 \\
12 & -181.172379 & -172.237226 \\
13 & -143.036471 & -134.101319 \\
14 & -146.954698 & -138.019546 \\
15 & -150.025009 & -141.089857 \\
16 & -169.047237 & -160.112084 \\
17 & -134.754288 & -125.819136 \\
18 & -116.984750 & -108.049597 \\
19 & -136.991334 & -128.056181 \\
20 & -135.678684 & -126.743532 \\
21 & -137.759104 & -128.823951 \\
22 & -150.287784 & -141.352631 \\
23 & -154.278070 & -145.342917 \\
24 & -132.792859 & -123.857707 \\
25 & -108.036399 & -99.101246 \\
\bottomrule%
\end{tabular}
\end{minipage}
\hspace{2cm}
\begin{minipage}[t]{0.4\textwidth}
  \vspace{0pt}
\begin{tabular}{rcc}
\toprule
Generation $g$ & $\Delta U_{\mathrm{LLN,cov}}(\delta)$ (Track II) & $\Delta G_{\mathrm{KF}}(\delta)$ (Track I) \\
\midrule
26 & -136.136651 & -127.201498 \\
27 & -137.363591 & -128.428438 \\
28 & -154.600200 & -145.665047 \\
29 & -134.832965 & -125.897812 \\
30 & -146.391209 & -137.456057 \\
31 & -152.558431 & -143.623278 \\
32 & -126.370288 & -117.435135 \\
33 & -166.712986 & -157.777834 \\
34 & -151.610056 & -142.674903 \\
35 & -158.777897 & -149.842744 \\
36 & -146.721644 & -137.786491 \\
37 & -171.229229 & -162.294077 \\
38 & -145.678374 & -136.743222 \\
39 & -134.396708 & -125.461556 \\
40 & -159.999869 & -151.064716 \\
41 & -151.415048 & -142.479895 \\
42 & -152.343583 & -143.408430 \\
43 & -136.593098 & -127.657945 \\
44 & -140.138527 & -131.203374 \\
45 & -163.640084 & -154.704932 \\
46 & -167.730336 & -158.795184 \\
47 & -138.267815 & -129.332663 \\
48 & -136.290231 & -127.355078 \\
49 & -174.970491 & -166.035338 \\
50 & -150.986188 & -142.051035 \\
\bottomrule%
\end{tabular}
\end{minipage}
\caption{
\textbf{Per-generation values for Figure~\ref{fig:tech_s1_tracks} (S1; restart-from-base).}
Track II and Track I are reported as drifts relative to the base checkpoint ($g=0$). In S1, both tracks remain consistently negative at the end of the run ($g=50$), indicating gradual contraction driven by the evolving synthetic pool even without weight carryover.
}
\label{tab:tech_s1_table}
\end{table*}

\begin{table*}[ht!]
\scriptsize
\setlength{\tabcolsep}{7pt}
\begin{minipage}[t]{0.4\textwidth}
\vspace{0pt}
\begin{tabular}{rcc}
\toprule
Generation $g$ & $\Delta U_{\mathrm{LLN,cov}}(\delta)$ (Track II) & $\Delta G_{\mathrm{KF}}(\delta)$ (Track I) \\
\midrule
0 & 0.000000 & 0.000000 \\
1 & -51.410418 & -1.041078 \\
2 & -70.477165 & -1.098910 \\
3 & -67.666663 & -1.110627 \\
4 & -102.965110 & -1.173303 \\
5 & -126.844686 & -1.220078 \\
6 & -156.320891 & -1.266556 \\
7 & -277.027401 & -1.665481 \\
8 & -482.600764 & -2.313280 \\
9 & -586.239077 & -2.729119 \\
10 & -589.243534 & -2.756645 \\
11 & -656.250424 & -2.917514 \\
12 & -409.708770 & -2.250243 \\
13 & -366.284030 & -2.065401 \\
14 & -326.883875 & -1.910236 \\
15 & -306.138619 & -1.815842 \\
16 & -266.006178 & -1.640532 \\
17 & -340.097542 & -1.858227 \\
18 & -459.780098 & -2.043834 \\
19 & -580.087915 & 1920.861134 \\
20 & -1222.621957 & 1011.736828 \\
21 & -1295.000015 & 925.612336 \\
22 & -1369.047226 & 906.107430 \\
23 & -985.967065 & 1455.366376 \\
24 & -993.200907 & 1426.878098 \\
25 & -1094.777825 & 1247.085610 \\
\bottomrule%
\end{tabular}
\end{minipage}
\hspace{2cm}
\begin{minipage}[t]{0.4\textwidth}
\vspace{0pt}
\begin{tabular}{rcc}
\toprule
Generation $g$ & $\Delta U_{\mathrm{LLN,cov}}(\delta)$ (Track II) & $\Delta G_{\mathrm{KF}}(\delta)$ (Track I) \\
\midrule
26 & -988.967085 & 1414.089742 \\
27 & -1033.645644 & 1337.676319 \\
28 & -1509.672539 & 671.355093 \\
29 & -1520.580241 & 409.488784 \\
30 & -1583.494588 & 323.527521 \\
31 & -1597.411170 & 297.547119 \\
32 & -1514.735790 & 589.234296 \\
33 & -1674.100026 & 139.234670 \\
34 & -1323.216571 & 78.225593 \\
35 & -304.629566 & 1.402128 \\
36 & -1284.962201 & 158.412491 \\
37 & -1415.459884 & 151.402919 \\
38 & -1426.475962 & 125.709896 \\
39 & -1773.662017 & 121.296272 \\
40 & -2036.524604 & 221.560576 \\
41 & -1983.942360 & 394.404189 \\
42 & -2191.980469 & 244.398064 \\
43 & -2020.929507 & 145.270029 \\
44 & -1994.987639 & 309.382624 \\
45 & -2028.916186 & 320.526442 \\
46 & -1955.248314 & -2.116992 \\
47 & -1829.693344 & 3.802502 \\
48 & -1885.721733 & 9.290424 \\
49 & -2000.020743 & -1.967390 \\
50 & -1536.562331 & -2.425766 \\
\bottomrule%
\end{tabular}
\end{minipage}
\caption{
\textbf{Per-generation values for Figure~\ref{fig:tech_s2_tracks} (S2; true recursion).}
Track II reaches large-magnitude negative drift (e.g., beyond $-1000$ by roughly $g\approx 20$ in this run), indicating rapid observed-spectrum contraction under weight carryover. Track I remains near zero and can be non-monotone (including occasional positive excursions), consistent with its role as a conservative envelope controlled by a tail-energy budget rather than by the observed spectrum itself.
}
\label{tab:tech_s2_table}
\end{table*}

\subsection{Surface-form proxy diagnostics and alignment with \sgma-UB}
\label{app:aux_baselines}

\sgma-UB is specifically constructed to follow the geometry of the gram matrix instead of merely capturing surface-level repetition, while still providing robust statistical validity and support for inference. Nevertheless, it is useful to ask whether geometry contraction coincides with familiar surface-form baseline metrics. We therefore compute two lightweight proxies on the \emph{same} generated text used for \sgma-UB embeddings: (i) distinct-2 (unique/total bigrams; higher is more diverse), and (ii) hashed $n$-gram HHI (higher is more concentrated/repetitive). These proxies are not collapse definitions and can be noisy, but they provide an operational sanity check.

\begin{figure*}[ht!]
  \centering
  \includegraphics[width=0.92\textwidth]{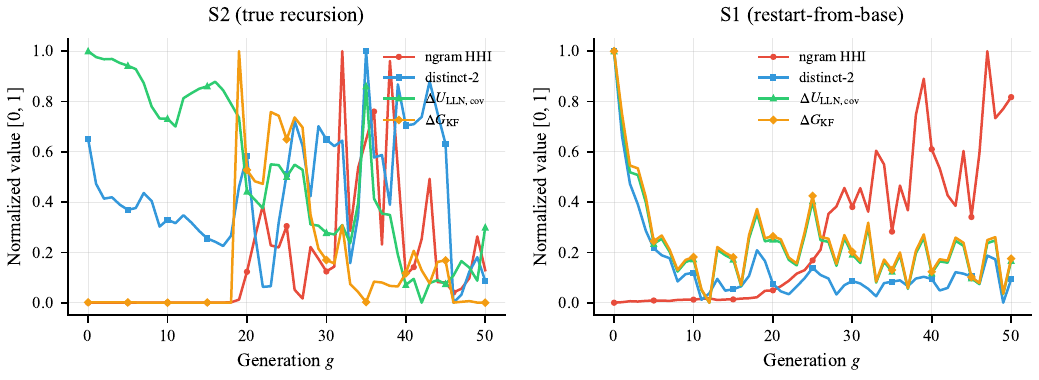}
  \caption{
  \textbf{Normalized overlay of \sgma-UB drifts and surface-form proxies (TECH).}
  Each curve is min--max normalized to $[0,1]$ within its panel for visual comparability. We overlay four signals across generations: Track II drift $\Delta U_{\mathrm{LLN,cov}}(\delta)$, Track I drift $\Delta G_{\mathrm{KF}}(\delta)$, distinct-2, and hashed $n$-gram HHI.
  \emph{Left:} S2 (true recursion). \emph{Right:} S1 (restart-from-base).
  The key qualitative trend is that the large early Track-II contraction in S2 precedes (and is accompanied by) stronger surface-form concentration (rising HHI) than in S1.
  }
  \label{fig:tech_surface_proxies}
\end{figure*}

Figure~\ref{fig:tech_surface_proxies} makes two practical points. First, in S2 the Track-II drift becomes strongly negative early, whereas hashed HHI rises more gradually and can spike later. This indicates that \sgma-UB Track II can act as an \emph{earlier} indicator of contraction than surface repetition alone. Second, Track I remains near the bottom of the normalized scale in S2 for much of the run, reinforcing that Track I is not intended as a sensitive trend detector; it is a conservative envelope that can remain loose even under clear contraction signals elsewhere.

To quantify the qualitative overlay, Table~\ref{tab:tech_surface_proxy_alignment} reports proxy endpoints ($g=0$ vs.\ $g=50$) and simple correlation diagnostics between Track II drift and the proxies over generations. The key operational takeaway is that true recursion (S2) yields substantially larger surface-form concentration than restart-from-base (S1), and this increase co-varies strongly with Track II under a rank-based measure.

\begin{table*}[ht!]
\centering
\footnotesize
\setlength{\tabcolsep}{2pt}
\begin{tabular}{lrrrrrrr}
\toprule
Setting &
distinct-2 ($g{=}0$) &
distinct-2 ($g{=}50$) &
HHI ($g{=}0$) &
HHI ($g{=}50$) &
$\mathrm{corr}(\Delta U,\mathrm{distinct}\text{-}2)$ &
$\mathrm{corr}(\Delta U,\mathrm{HHI})$ &
Spearman$(\Delta U,\mathrm{HHI})$ \\
\midrule
S1 & 0.731 & 0.334 & 0.000102 & 0.013385 & 0.926 & -0.286 & -0.318 \\
S2 & 0.715 & 0.254 & 0.000100 & 0.097262 & -0.165 & -0.468 & -0.685 \\
\bottomrule
\end{tabular}
\caption{
\textbf{Surface-proxy alignment with Track II (TECH).}
Distinct-2 decreases and hashed $n$-gram HHI increases in both settings, with substantially stronger concentration under true recursion (S2).
Correlations are computed over generations using the per-generation series (including $g=0$).
In S1, $\Delta U$ and distinct-2 co-move strongly (both decrease), yielding a high positive Pearson correlation.
In S2, distinct-2 is noisier (and can transiently increase despite strong contraction), while HHI retains a strong monotone association with $\Delta U$ (Spearman $\approx -0.69$), consistent with the overlay in Figure~\ref{fig:tech_surface_proxies}.
}
\label{tab:tech_surface_proxy_alignment}
\end{table*}

\begin{figure*}[ht!]
  \centering
  \includegraphics[width=0.85\textwidth]{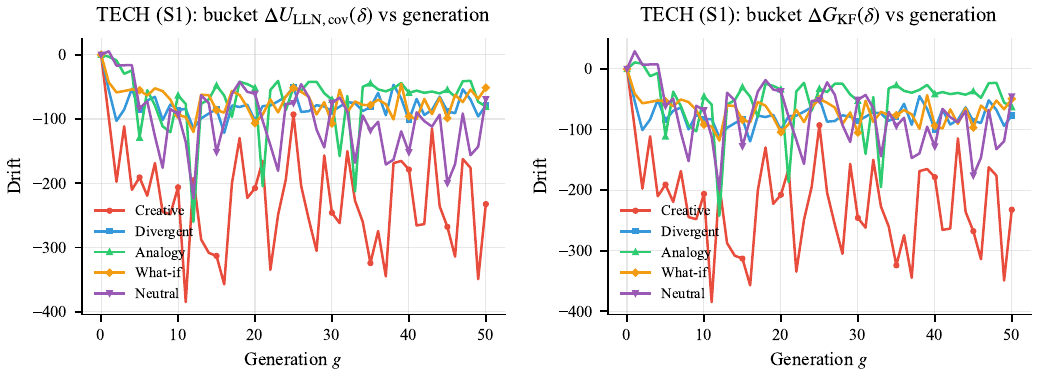}
  \caption{
  \textbf{Bucket localization under S1 (TECH; optional).}
  \sgma-UB drifts computed after restricting evaluation to each frozen prompt bucket.
  Bucket curves can be noisier than the aggregate because each bucket contains fewer prompts; we therefore interpret bucket localization as a qualitative ``where does collapse concentrate?'' tool rather than as a primary metric.
  }
  \label{fig:tech_bucket_localization}
\end{figure*}

\subsection{Within-domain localization by frozen prompt buckets }
\label{app:bucket_localization}

Because the evaluation prompt bank is annotated by prompt type (creative, divergent, analogy/ELI5, what-if, neutral), we can attribute model collapse to particular regimes while holding the evaluation framework constant over time. Specifically, we modularize the evaluation pipeline by limiting the framework to prompts from a single category, producing one response per prompt with identical decoding settings, embedding these responses using the same frozen encoder, and then computing the same \sgma-UB drift metrics.

\paragraph{Interpretation.}
Bucket localization helps answer a practical question: is model collapse uniformly distributed across prompt types, or concentrated where the model is pushed into higher-entropy regimes? In the reported S1 run, the \textsc{Creative} bucket shows consistently larger-magnitude Track-II contraction than the aggregate, suggesting that open-ended prompts amplify the representational collapse induced by synthetic recursion. That said, bucket series are noisier because each bucket contains fewer prompts and therefore smaller sample size.

\begin{table*}[ht!]
\centering
\scriptsize
\setlength{\tabcolsep}{6pt}
\begin{tabular}{rccccc}
\toprule
$g$ &
Creative &
Divergent &
Analogy/ELI5 &
What-if &
Neutral \\
\midrule
0 & 0.000000 & 0.000000 & 0.000000 & 0.000000 & 0.000000 \\
1 & -97.949065 & -55.159718 & -3.116045 & -42.928381 & 5.319456 \\
2 & -197.646061 & -102.878209 & -8.989381 & -58.633942 & -16.935859 \\
3 & -111.319080 & -85.428164 & -29.718339 & -56.452165 & -16.401600 \\
4 & -210.171845 & -52.805090 & -24.366967 & -53.958981 & -16.339483 \\
5 & -190.844854 & -86.526231 & -129.090423 & -55.162083 & -83.129598 \\
6 & -219.455934 & -71.041283 & -55.255710 & -63.533863 & -72.571280 \\
7 & -168.494531 & -65.564080 & -79.075972 & -52.309351 & -126.984507 \\
8 & -244.501580 & -101.711687 & -111.373454 & -56.608878 & -175.909325 \\
9 & -248.006326 & -77.422012 & -120.572586 & -69.700436 & -84.569910 \\
10 & -206.202956 & -85.442737 & -63.331140 & -93.868086 & -92.089470 \\
11 & -384.826592 & -84.560220 & -76.590153 & -96.869445 & -135.712099 \\
12 & -191.844812 & -114.666577 & -260.148545 & -120.239236 & -223.771048 \\
13 & -287.341241 & -99.117427 & -76.357705 & -62.108385 & -63.006851 \\
14 & -308.242176 & -91.501159 & -69.393513 & -63.105835 & -75.480431 \\
15 & -312.938055 & -84.708643 & -47.853525 & -86.176912 & -151.496944 \\
16 & -357.149831 & -121.594475 & -63.509059 & -90.049027 & -106.138078 \\
17 & -200.374851 & -78.959020 & -96.759571 & -56.453774 & -61.019787 \\
18 & -129.635475 & -81.399424 & -42.033181 & -62.702197 & -42.118850 \\
19 & -222.708576 & -77.688492 & -45.745484 & -81.863135 & -57.727289 \\
20 & -207.613162 & -102.415834 & -51.668224 & -106.350109 & -60.770080 \\
21 & -165.012269 & -79.875531 & -204.677032 & -93.085949 & -102.880295 \\
22 & -334.643279 & -79.037445 & -54.864907 & -69.461680 & -140.839753 \\
23 & -248.946713 & -72.418699 & -40.687391 & -89.783746 & -180.663248 \\
24 & -194.390032 & -65.316534 & -86.586005 & -64.861563 & -76.556536 \\
25 & -92.896077 & -50.135445 & -50.367304 & -51.637901 & -75.460966 \\
26 & -204.110634 & -88.983083 & -55.942330 & -57.759885 & -45.832592 \\
27 & -256.717613 & -87.868162 & -42.095798 & -65.592657 & -66.432976 \\
28 & -305.102122 & -78.531429 & -42.149740 & -92.922173 & -116.332023 \\
29 & -156.620419 & -80.583804 & -59.878911 & -73.651079 & -144.240864 \\
30 & -245.838960 & -88.216022 & -70.145223 & -107.060914 & -74.803194 \\
31 & -261.953853 & -80.824331 & -157.886675 & -54.458516 & -66.717892 \\
32 & -150.001326 & -73.853490 & -64.132656 & -89.356801 & -88.258118 \\
33 & -227.135310 & -75.126110 & -212.806264 & -72.637732 & -167.992962 \\
34 & -259.798832 & -87.601284 & -49.906544 & -78.191872 & -94.565098 \\
35 & -324.139874 & -81.799765 & -44.590636 & -78.285666 & -117.771632 \\
36 & -274.266735 & -59.455086 & -53.930531 & -69.795271 & -104.338444 \\
37 & -344.774439 & -94.113630 & -56.990416 & -76.943106 & -170.769428 \\
38 & -168.644537 & -46.805503 & -52.271075 & -100.768563 & -163.202054 \\
39 & -165.214096 & -68.054030 & -44.045319 & -45.225875 & -119.155531 \\
40 & -178.614273 & -101.818107 & -59.327405 & -95.767226 & -151.824765 \\
41 & -265.623322 & -68.819743 & -55.908253 & -100.310327 & -89.635536 \\
42 & -263.447148 & -93.701634 & -59.242546 & -68.744265 & -103.269554 \\
43 & -114.642464 & -85.956916 & -57.339421 & -92.521798 & -111.816243 \\
44 & -235.803806 & -65.343599 & -60.264987 & -66.722845 & -93.284889 \\
45 & -267.718776 & -87.940112 & -54.625358 & -98.657337 & -199.815871 \\
46 & -314.117762 & -91.077680 & -65.452508 & -62.377067 & -170.152065 \\
47 & -162.290329 & -53.746197 & -41.316406 & -73.824375 & -91.395436 \\
48 & -175.864254 & -69.564232 & -40.553158 & -54.144934 & -156.291398 \\
49 & -349.254658 & -96.036588 & -73.186376 & -68.534576 & -143.092325 \\
50 & -232.342407 & -78.865678 & -80.803164 & -51.253275 & -69.202759 \\
\bottomrule%
\end{tabular}
\caption{
\textbf{Track-II drift by bucket under S1 (TECH; optional).}
Each entry is $\Delta U_{\mathrm{LLN,cov}}(\delta)$ computed using only prompts from the given bucket (baseline $g=0$).
In this run, \textsc{Creative} tends to exhibit the largest-magnitude negative drift by late generations, while other buckets are comparatively less contracted.
Because bucket sizes differ, cross-bucket comparisons should be interpreted qualitatively unless normalized for prompt count.
}
\label{tab:tech_bucket_trackII}
\end{table*}

}{}

\end{document}